\newtheorem{proposition}{Proposition}
\def\tsc#1{\csdef{#1}{\textsc{\lowercase{#1}}\xspace}}
\begin{document}
\let\WriteBookmarks\relax
\def\floatpagepagefraction{1}
\def\textpagefraction{.001}
\shorttitle{Cluster-based Low-Rank Matrix Approximation for Medical Image Compression}
\shortauthors{S. Hamlomo and M. Atemkeng}

\title [mode = title]{Clustering-Based Low-Rank Matrix Approximation for Medical Image Compression}                      



\author[1,2]{Sisipho Hamlomo}[type=editor,
                        orcid=0000-0003-0744-8433]
\cormark[1]
\fnmark[1]
\ead{s.hamlomo@ru.ac.za}
\credit{Conceptualization, Methodology, Software, Formal analysis, Investigation, Writing - Original Draft}

\affiliation[1]{organization={Department of Statistics, Rhodes University},
                addressline={PO Box 94}, 
                city={Makhanda},
                postcode={6140}, 
                state={Eastern Cape},
                country={South Africa}}
                
\author[2,3]{Marcellin Atemkeng}[%
   ]
\fnmark[2]
\ead{m.atemkeng@ru.ac.za}

\credit{Conceptualization, Methodology, Formal analysis, Writing - Review \& Editing, Supervision, Funding acquisition}

\affiliation[2]{organization={Department of Mathematics, Rhodes University},
                addressline={PO Box 94}, 
                city={Makhanda},
                postcode={6140}, 
                state={Eastern Cape},
                country={South Africa}}

\affiliation[3]{organization={National Institute for Theoretical and Computational Sciences (NITheCS)},
                city={Stellenbosch},
                postcode={7600}, 
                state={Western Cape}, 
                country={South Africa}}

\cortext[cor1]{Corresponding author}

\begin{abstract}
 Medical images are inherently high-resolution and contain locally varying structures that are crucial for diagnosis. Efficient compression of such data must therefore preserve diagnostic fidelity while minimizing redundancy. Low-rank matrix approximation (LoRMA) techniques have shown strong potential for image compression by capturing global correlations; however, they often fail to adapt to local structural variations across regions of interest, i.e., tumor areas or other diagnostically significant regions. To address these limitations, we introduce an adaptive LoRMA,  which  partitions a medical image into overlapping patches, groups structurally similar patches into several clusters using k-means, and performs SVD within each cluster.  We derive the overall compression factor accounting for patch overlap and analyze how patch size influences compression efficiency and computational cost. While the proposed adaptive LoRMA method is applicable to any data exhibiting high local variation, we focus on medical imaging due to its pronounced local variability. We evaluate and compare our adaptive LoRMA against global SVD across four imaging modalities: MRI, ultrasound, CT scan, and chest X-ray. Results demonstrate that adaptive LoRMA effectively preserves structural integrity, edge details, and diagnostic relevance, as measured by peak signal-to-noise ratio (PSNR), structural similarity index (SSIM), mean squared error (MSE), intersection over union (IoU), and edge preservation index (EPI). Adaptive LoRMA significantly minimizes block artifacts and residual errors, particularly in pathological regions, consistently outperforming global SVD in terms of PSNR, SSIM, IoU, EPI, and achieving lower MSE. Adaptive LoRMA prioritizes clinically salient regions  while allowing aggressive compression in non-critical regions, optimizing storage efficiency.  Although adaptive LoRMA requires higher processing time, its diagnostic fidelity justifies the overhead for high-compression applications.
\end{abstract}



\begin{keywords}
Low-rank matrix approximation \sep adaptive compression \sep cluster-based SVD \sep k-means clustering \sep medical imaging
\end{keywords}

\maketitle

\section{Introduction}
Modern medical imaging modalities such as MRI, CT, ultrasound, and X-ray produce high-resolution images that enable precise clinical diagnosis but pose significant challenges in terms of storage, transmission, and computational efficiency \cite{lingala2011accelerated, zhou2014low, saha2023matrix}. Low-rank matrix approximation (LoRMA) has become a fundamental tool in machine learning and other domains due to its ability to capture essential structures in high-resolution and large-scale data while suppressing noise and redundancy. By representing a high-resolution medical image with a lower-rank approximation, LoRMA reduces storage requirements and computational costs while preserving essential structural information. A widely used approach to LoRMA is the singular value decomposition (SVD) \cite{kishore2017literature}, which provides an optimal low-rank approximation under the Frobenius norm, as demonstrated by the Eckart-Young theorem \cite{eckart1936approximation}. However, despite its effectiveness in many domains, the global nature of truncated SVD imposes limitations when applied to real-world data, particularly when local variations are critical. In medical imaging, for example, many medical images exhibit gradual decay in their singular values, making it challenging to choose a hard threshold for truncation without risking the loss of important features \cite{hansen1990discrete, leblond2010singular, candes2011robust}. Truncated SVD (what we refer in this paper as global SVD) applies uniform compression across the entire medical image, disregarding local variations in structural complexity. This uniformity assumes homogeneity in the data's structure, an assumption that rarely holds true for many medical images \cite{smith2021variability, brooks2013quantification, hadjidemetriou2009restoration}. For example, while some regions of a medical image may be relatively smooth, others can contain important features. The global SVD technique not only fails to adapt to these local variations but can also introduce over-smoothing and block artifacts, leading to the loss of important features \cite{gonzalez2009digital, leblond2010singular, golub2013matrix, gu2014weighted, aishwarya2016lossy}. Furthermore, the combinatorial nature of rank minimization makes strict rank constraints computationally intractable for large-scale data matrices \cite{recht2010guaranteed, candes2012exact}.

Local low-rank matrix approximation (LLoRMA) was proposed to address these limitations. The LLoRMA technique, initially introduced by Lee \cite{lee2013local} in 2013, was applied to recommendation systems to improve prediction accuracy by capturing local structures within data matrices. Their experiments utilized datasets such as MovieLens, EachMovie, and Netflix, focusing on improving user preference predictions. While their primary focus was on recommendation systems, subsequent research has extended the application of LLoRMA to other domains, such as hyperspectral imaging \cite{wang2017hyperspectral, he2019non},  dynamic MRI reconstruction \cite{otazo2015low, yaman2017locally}, and so on. Other studies, such as non-local means denoising \cite{coupe2006fast, manjon2008mri, manjon2010adaptive, buades2011non, dutta2013non} and block-matching 3D filtering \cite{dabov2006image, chen2010image, feruglio2010block, eksioglu2016decoupled}, demonstrated the benefits of patch-based processing for preserving and extracting important image features. Building on these ideas, this paper proposes a cluster-based SVD mathematical framework. The cluster-based SVD technique partitions a medical image into overlapping patches, groups similar patches using k-means, and applies SVD within each cluster. By adapting the compression factor based on the local medical image features, the method aims to preserve and extract important features while reducing storage burdens. 

Our key contributions are summarized as follows:
\begin{itemize}
    \item We theoretically demonstrate that a global rank \(r_g\) does not always exist such that cluster-based and global SVD compression factors are approximately the same. This arises due to structural differences in how information redundancy and local variations are exploited by each method.
    \item We provide a comprehensive derivation of the overall compression factor, explicitly incorporating redundancy introduced by overlapping patches. 
    \item We analytically show that the overall compression factor increases monotonically with patch size. This observation shows the importance of patch size selection in balancing compression factor and computational cost.
    \item We introduce a pseudo-code to calculate the overlap proportion and determine the effective number of patches. 
    \item We discuss the computational cost of cluster-based SVD in detail and how it can be reduced.
    \item We evaluate cluster-based and global SVD methods across different medical image modalities, such as MRI, CT scan, ultrasound, and X-ray. Our experiments provide insights into modality-specific performance and demonstrate the advantages of cluster-based SVD in preserving and extracting important features.
\end{itemize}

The rest of this paper is organized as follows: Section~\ref{sec:background} introduces low-rank matrix approximation and its challenges; Section~\ref{sec:method} describes our proposed cluster-based SVD method in detail. Section~\ref{sec:evaluation} introduces the evaluation metrics used to assess the performance of our proposed cluster-based SVD method. Section~\ref{sec:experiment} discusses the experimental setup and the results. Section~\ref{sec:discussion} discusses the findings and limitations, and Section~\ref{sec:conclusion} concludes with key insights and future directions.\section{Technical background}{\label{sec:background}}
\subsection{Low-rank matrix approximation}
Many real-world data matrices exhibit redundancy \cite{saha2023matrix}, meaning their information can be represented using a lower-dimensional subspace without significant loss. This property motivates the use of low-rank approximation, where a medical image is approximated by another medical image of reduced rank, preserving essential structures and extracting important features while discarding noise and redundant information. A fundamental problem in low-rank approximation is to find a rank-\( r_g \) matrix \( \mathbf{X} \) that best approximates a given medical image \( \mathbf{A} \in \mathbb{R}^{m \times n} \) under the Frobenius norm
\begin{align}
\min_{\mathbf{X} \in \mathcal{M}_{r_g}} \| \mathbf{A} - \mathbf{X} \|_F^2,
\end{align}
where the feasible set is defined as 
\begin{align}
\mathcal{M}_{r_g} = \{\mathbf{X} \in \mathbb{R}^{m \times n} : \operatorname{rank}(\mathbf{X}) = r_g \}. 
\end{align}
By the Eckart–Young theorem, the optimal solution to this problem is given by the truncated SVD. This choice of \( \mathbf{A}_{r_g} \) minimizes the approximation error. That is,  
\begin{align}
\mathbf{A}_{r_g} = \underset{\mathbf{X} \in \mathcal{M}_{r_g}}{\operatorname{argmin}} \|\mathbf{A} - \mathbf{X}\|_F^2.
\end{align}

In many practical scenarios, the singular values of \( \mathbf{A} \) decay gradually, and imposing a hard rank threshold may result in the loss of important structural components \cite{candes2012exact}. To address these limitations, an alternative approach is to replace the nonconvex rank constraint with its convex envelope, the nuclear norm \cite{recht2010guaranteed}. The nuclear norm encourages low-rank structure while keeping the optimization problem convex. The relaxed optimization problem is then formulated as
\begin{align}
\min_{\mathbf{X} \in \mathbb{R}^{m \times n}} \|\mathbf{A} - \mathbf{X}\|_F^2 + \lambda \|\mathbf{X}\|_*,
\end{align}
where \(\|\mathbf{X}\|_* = \sum_{i=1}^{\min(m,n)} \sigma_i(\mathbf{X})\) denotes the nuclear norm, \(\sigma_i(\mathbf{X})\) denotes the singular values of \(\mathbf{X}\) and \( \lambda > 0 \) is a regularization parameter controlling the trade-off between reconstruction fidelity and rank penalization. This formulation is particularly useful when dealing with noisy or incomplete data, as it allows for a more stable approximation. A widely used method for solving nuclear norm minimization problems is the singular value thresholding (SVT) algorithm \cite{cai2010singular}, which iteratively updates \( \mathbf{X} \) using the rule
\begin{align}
\mathbf{X}^{(k+1)} = D_{\tau} \Bigl( \mathbf{X}^{(k)} + \mu \bigl(\mathbf{A} - \mathbf{X}^{(k)}\bigr) \Bigr),
\end{align}
where \( \mu \) is a step size, \( \tau = \lambda\mu \), and the soft-thresholding operator \( D_{\tau}(\mathbf{Y}) \) is applied to the singular values as follows:
\begin{align}
D_{\tau}(\mathbf{Y}) = \mathbf{U} S_{\tau}(\mathbf{\Sigma}) \mathbf{V}^T, \quad \text{where} \quad S_{\tau}(\sigma_i) = \max\{\sigma_i - \tau, 0\}.
\end{align}
This iterative approach shrinks small singular values, effectively reducing rank while maintaining significant structural information in \( \mathbf{A} \). An alternative constrained formulation replaces the nuclear norm penalty with an explicit bound
\begin{align}
\min_{\mathbf{X} \in \mathbb{R}^{m \times n}} \|\mathbf{A} - \mathbf{X}\|_F^2 \quad \text{s.t.} \quad \|\mathbf{X}\|_* \leq \tau.
\end{align}
This formulation ensures a direct control over the nuclear norm of \( \mathbf{X} \), thereby influencing its rank. Duality between the penalized and constrained formulations allows for algorithmic flexibility, with methods such as the alternating direction method of multipliers frequently used for efficient optimization \cite{eckstein2015understanding, lin2022alternating, wei2012distributed}. However, different singular values contribute unequally to the intrinsic structure of the data. The standard nuclear norm penalizes all singular values uniformly, which can lead to over-shrinking of significant components. To overcome this issue, \cite{gu2014weighted} proposed weighted nuclear norm minimization (WNNM) which introduces weights \( w_i > 0 \) to form a weighted nuclear norm
\begin{align}
\|\mathbf{X}\|_{w,*} = \sum_{i=1}^{\min(m,n)} w_i \sigma_i(\mathbf{X}),
\end{align}
thereby allowing larger singular values, which often capture essential structural information, to be penalized less. The resulting optimization problem is then written as
\begin{align}
\min_{\mathbf{X} \in \mathbb{R}^{m \times n}} \|\mathbf{A} - \mathbf{X}\|_F^2 + \lambda \|\mathbf{X}\|_{w,*},
\end{align}
or in a constrained form
\begin{align}
\min_{\mathbf{X} \in \mathbb{R}^{m \times n}} \|\mathbf{A} - \mathbf{X}\|_F^2 \quad \text{s.t.} \quad \|\mathbf{X}\|_{w,*} \leq \tau.
\end{align}
Solving the WNNM problem typically involves a weighted singular value thresholding operator, where each singular value \( \sigma_i \) is shrunk by an amount proportional to its corresponding weight, i.e., 
\begin{align}
S_{\tau_i}(\sigma_i) = \max(\sigma_i - \lambda w_i, 0).
\end{align}
This nonuniform shrinkage better preserves important features in \( \mathbf{A} \) while effectively reducing noise.

There exist several low-rank approximation methods that have not been discussed above. For example, robust principal component analysis decomposes a matrix into a low-rank component and a sparse component, making it particularly effective in scenarios with significant outliers or structured noise \cite{zhao2014robust, bao2012inductive, liu2019adaptive}. In addition, CUR decomposition approximates a matrix by selecting a subset of its actual columns and rows, providing a more interpretable decomposition that preserves key data characteristics \cite{drineas2008relative, mahoney2009cur, boutsidis2014optimal}. Other methods, such as QR decomposition with column pivoting and low-rank matrix factorization, offer computationally efficient alternatives for obtaining low-rank approximations, with the former prioritizing the most informative columns and the latter being widely used in matrix completion tasks \cite{duersch2017randomized, xiao2017fast}. While these techniques provide valuable alternatives for various applications, in this paper we are only concerned about SVD that is why it warrants a discussion.
\subsection{Global SVD}
\subsubsection{Theoretical overview}
SVD decomposes a medical image, $\mathbf{A}\in \mathbb{R}^{m\times n}$ ($m>n$), into three independent matrices:\begin{align}{\label{SVD1}}
\mathbf{A}&=\mathbf{U}\mathbf{\Sigma} \mathbf{V}^T\\
&=\sum_{i=1}^n\sigma_i \mathbf{u}_i\mathbf{v}_i^T,
\end{align}
where $\mathbf{U}\in \mathbb{R}^{m\times m}$ and $\mathbf{V}\in \mathbb{R}^{n \times n}$ are unitary matrices, $\mathbf{\Sigma}$ is a diagonal matrix and $^T$ denotes the transpose. The diagonal entries $\sigma_i$ of $\mathbf{\Sigma}$ are uniquely determined by $\mathbf{A}$ and are known as the singular values of $\mathbf{A}$. The singular values are arranged in descending order, that is, $\sigma_1\geq \sigma_2\geq \sigma_3\cdots \geq\sigma_n$. The columns of $\mathbf{U}$ and the columns of $\mathbf{V}$ are the left and right singular vectors of $\mathbf{A}$ respectively. The vectors $\mathbf{u}_i \in \mathbb{R}^m$ and $\mathbf{v}_i\in \mathbb{R}^n$ are the orthonormal eigenvectors of $\mathbf{AA}^T$ and  $\mathbf{A}^T\mathbf{A}$ respectively. By retaining only $r_g$ (where $1<r_g<n$) singular values, we obtain a compressed medical image, $\mathbf{A}_{r_g}$, which is an approximation of $\mathbf{A}$. That is, 
\begin{gather}
    \mathbf{A}_{r_g}=\mathbf{U}_{r_g}\mathbf{\Sigma}_{r_g}\mathbf{V}^T_{r_g}\approx \mathbf{A},
\end{gather}
where $\mathbf{U}_{r_g}$ denotes a matrix of size $m\times r_g$ containing the first $r_g$ columns of $\mathbf{U}$, $\mathbf{\Sigma}_{r_g}$ denotes a diagonal matrix of size $r_g\times r_g$ containing $r_g$ singular values, and $\mathbf{V}_{r_g}^T$ denotes a matrix of size $r_g\times n$ containing the first $r_g$ columns of $\mathbf{V}$. Each value of $\sigma_i$ quantifies how much the corresponding component $\sigma_i \mathbf{u}_i\mathbf{v}_i^T$ contributes in the reconstruction of $\mathbf{A}$, that is, the larger the value of $\sigma_i$ the more $\sigma_i \mathbf{u}_i\mathbf{v}_i^T$ contributes to the reconstruction of $\mathbf{A}$. The computational cost for reconstructing \(\mathbf{A}\) is approximately given by
\begin{align}{\label{globalcost}}
\text{cost}_{\text{global SVD}}\sim\mathcal{O}(mnr_g).
\end{align}
For large data matrices, the computational cost in Eq.~\ref{globalcost} can scale poorly.
\subsubsection{Data matrix compression}
The compression factor, $CF_{\text{global}}(r_g)$, is defined as the ratio between the size of the uncompressed data and the size of the compressed data. For an $m\times n$ matrix the number of sub-matrices needed to compute $\mathbf{A}_{r_g}$ is $r_g(m+n+1)$ \cite{atemkeng2023lossy}. This leads to an overall $CF_{\text{global}}(r_g)$ of 
\begin{align}{\label{components}}
    CF_{\text{global}}(r_g)=\frac{mn}{r_g(m+n+1)}.
\end{align}
Using Eq.~\ref{components}, the number of singular values to retain is computed as
\begin{align}
    r_g=\left\lceil\frac{mn}{CF_{\text{global}}(r_g)(m+n+1)}\right\rceil,
\end{align}
where $\lceil\cdot\rceil$ gives the smallest integer. The value of $r_g$ directly influences the degree of compression achieved for a medical image. If fewer singular values are retained, more signal is discarded, resulting in higher compression but possibly lower matrix quality. Conversely, retaining more singular values preserves more signal, resulting in higher approximated matrix quality but less compression. The compression loss can be computed as 
\begin{align}{\label{loss}}
    \rVert\mathbf{A}-\mathbf{A}_{r_g}\lVert_F\leq \epsilon \rVert\mathbf{A}\lVert_F,
\end{align}
where $\epsilon$ denotes a tolerance parameter determining the maximum allowable difference between the original matrix $\mathbf{A}$ and its approximated version $\mathbf{A}_{r_g}$. The minimum compression loss resulting from the Frobenius norm of the zeroed singular values is \cite{eckart1936approximation}
\begin{align}
    \rVert\mathbf{A}-\mathbf{A}_{r_g}\lVert_F=\sqrt{\sum\limits_{k=r_g+1}^{n}\sigma^2_k}.
\end{align}
The number of singular values to retain can be chosen such that
\begin{align}{\label{retained components}}
&\lVert\mathbf{A}_{r_g}\rVert_F\geq \alpha \lVert\mathbf{A}\rVert_F\\
        & \sqrt{\sum\limits_{k=1}^{r_g}\sigma_{k}^2}\geq \alpha \sqrt{\sum\limits_{k=1}^{n}\sigma_{k}^2}\label{mintolarance},
\end{align}
where $\alpha$ denotes the minimum percentage of the signal to be preserved. That is 
\begin{align}{\label{threshold}}
r_g = \operatorname*{argmin}_{r'} \left\{ r' : \sum_{k=1}^{r'} \sigma_k^2 \ge \alpha \sum_{k=1}^{n} \sigma_k^2 \right\}.
\end{align}
Eq.~\ref{threshold} emphasizes that $r_g$ is the smallest value that ensures the retained signal energy meets the specified threshold, $\alpha$.
\section{Method}{\label{sec:method}}
\subsection{Overall workflow}
The proposed method exploits the local and non-local self-similarity of medical images by first grouping similar patches using k-means and then applying SVD within each cluster. This enables adaptive compression while preserving and extracting important features. In contrast to global SVD (shown in Fig.~\ref{fig:workflow}(a)), which compresses the entire matrix uniformly, our approach enables localized compression based on structural complexity. Fig.~\ref{fig:workflow}(b) illustrates the main steps of our proposed cluster-based SVD method. The process starts with patch extraction, where the input medical image is divided into overlapping patches. Next, the patches are grouped using k-means clustering to organize patches with similar structural patterns into different clusters. Each cluster is then processed independently with truncated SVD, where the number of singular values retained is adapted to the local complexity of the cluster. The compressed cluster matrices are then aggregated and reconstructed to obtain the final compressed matrix. This ensures that only the most informative components are retained, while redundant and less important pixels are removed.
\begin{figure*}
    \centering
    \includegraphics[width=0.9\textwidth]{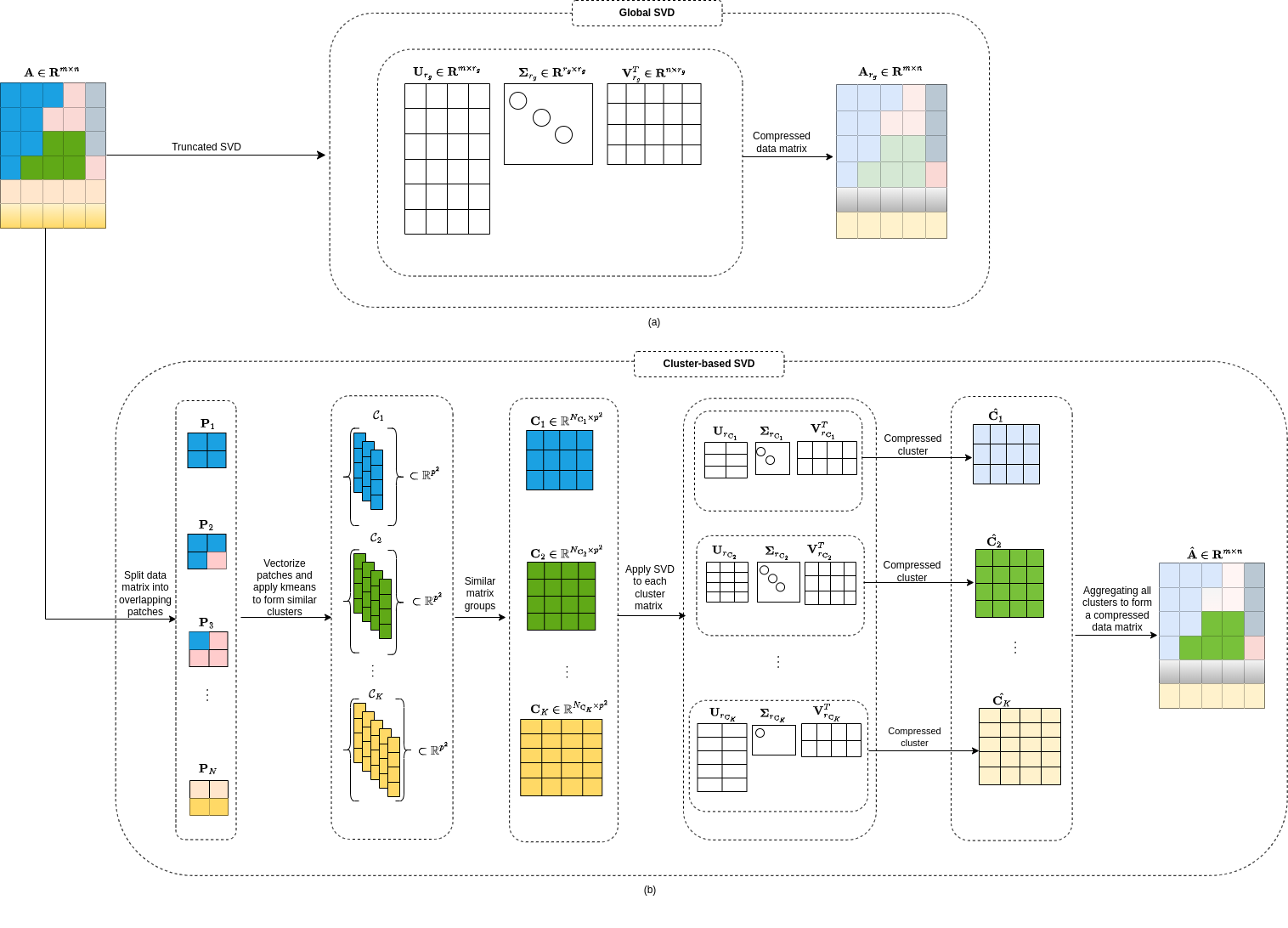}%
    \caption{Overview of the cluster-based and global SVD for data compression. (a) Global SVD applies truncated SVD directly to the original medical image $\mathbf{A}$, yielding low-rank components $\mathbf{U}_{r_g}$, $\boldsymbol{\Sigma}_{r_g}$, and $\mathbf{V}_{r_g}^T$, from which the compressed medical image $\mathbf{A}_{r_g}$ is reconstructed. (b) In cluster-based SVD, the original medical image \(\mathbf{A}\) is first split into overlapping \(p\times p\) patches \(\{\mathbf{P}_i\}_{i=1}^N\). Each patch is vectorized into \(\mathbb{R}^{p^2}\) and grouped into \(K\) clusters \(\{\mathcal{C}_k\}_{k=1}^K\) using k‑means, which are used to form cluster matrices \(\mathbf{C}_k\in\mathbb R^{N_{\mathbf{C}_k}\times p^2}\). SVD is then applied independently to each \(\mathbf{C}_k\) to produce compressed cluster matrices \(\widehat{\mathbf{C}}_k\). Finally, the compressed patches are aggregated to reconstruct the compressed medical image \(\widehat{\mathbf{A}}\).}
    \label{fig:workflow}
\end{figure*}
\subsection{Patch grouping}
In this paper, we use the k-means clustering algorithm to group similar patches, thereby exploiting the local and non-local self-similarity of the medical images \cite{1467423, dabov2007image}. The idea is to identify and group patches that have similar features, even if they are spatially distant from each other. The k-means algorithm is used for proof of concept in this paper and other similar measure methods could also be considered; however this is not the aim for this paper, and we leave it for future work. To group similar patches using k-means clustering, we divide the medical image into overlapping patches (this helps to capture more contextual information and reduce the boundary artifacts) each of size \( p \times p \) (as shown in Algorithm~\ref{alg:patch_grouping}, line 5). Each patch \( \boldsymbol{P}_k \) is vectorized, that is,
\begin{align}
\mathbf{p}_k = \text{vec}(\boldsymbol{P}_k) \in \mathbb{R}^{p^2}.
\end{align}
We partition the patches into \( K \) clusters, \(\boldsymbol{\mathcal{C}}= \{ \mathcal{C}_1, \mathcal{C}_2, \cdots, \mathcal{C}_K \} \), where each cluster \( \mathcal{C}_i \) is a set of vectors such that
\begin{align}
\mathcal{C}_i \cap \mathcal{C}_j = \emptyset \quad \text{for all } i \neq j, \quad \bigcup_{i=1}^K \mathcal{C}_i = \{\mathbf{p}_k\}_{k=1}^N,
\end{align}
where \(N\) denotes the total number of patches. This ensures that each patch \( \mathbf{p}_k \) is assigned to only one cluster, minimizing the within-cluster variance (for more details, see Proposition~\ref{min variance}). At each iteration of k-means, the patches are assigned to the cluster with the closest centroid 
\begin{align}
\mathcal{C}_i = \left\{ \mathbf{p}_k : i = \arg\min_j \|\mathbf{p}_k - \boldsymbol{\mu}_j\|^2 \right\},
\end{align}
where \( \boldsymbol{\mu}_j \in \mathbb{R}^{p^2} \) is the centroid of cluster \( \mathcal{C}_j \). The goal is to minimize the following objective function
\begin{align}{\label{within-cluster variance}}
J(\boldsymbol{\mathcal{C}}, \{\boldsymbol{\mu}_i\}) = \sum_{i=1}^K \sum_{\mathbf{p}_k \in \mathcal{C}_i} \|\mathbf{p}_k - \boldsymbol{\mu}_i\|^2.
\end{align}
The centroids \( \boldsymbol{\mu}_i \) are updated after each iteration as
\begin{align}
\boldsymbol{\mu}_i = \frac{1}{N_{\mathbf{C}_i}} \sum_{\mathbf{p}_k \in \mathcal{C}_i} \mathbf{p}_k,
\end{align}
where \(N_{\mathbf{C}_i}\) denotes the number of patches in cluster \(\mathcal{C}_i \). This iterative process continues until convergence, where the cluster assignments no longer change.
\begin{proposition}{\label{min variance}}
The within-cluster variance  
\begin{align*}
J(\boldsymbol{\mathcal{C}}, \{\boldsymbol{\mu}_i\}) = \sum_{i=1}^{K} \sum_{\mathbf{p}_k \in \mathcal{C}_i} \|\mathbf{p}_k - \boldsymbol{\mu}_i\|^2
\end{align*}
is minimized if and only if (1) for each \(i\), \(\boldsymbol{\mu}_i\) is the arithmetic mean of the vectors in \(\mathcal{C}_i\), and (2) each \(\mathbf{p}_k\) is assigned to the cluster whose centroid is nearest in Euclidean distance.
\end{proposition}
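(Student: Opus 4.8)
The plan is to exploit the fact that $J$ separates cleanly in its two groups of arguments — the centroids $\{\boldsymbol{\mu}_i\}$ and the partition $\boldsymbol{\mathcal{C}}$ — so that I can treat the two claimed conditions as the optimality conditions of two decoupled subproblems, each of which I can solve exactly. I would therefore prove necessity of the two conditions separately (the ``only if'' direction, which characterizes a minimizer) and then note that each condition is in fact the \emph{unique} optimizer of its respective block, which supplies the ``if'' direction in the coordinatewise sense the proposition intends.

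First I would fix the partition $\boldsymbol{\mathcal{C}}$ and minimize over the centroids. Since the double sum is a sum of independent per-cluster terms $\sum_{\mathbf{p}_k \in \mathcal{C}_i}\|\mathbf{p}_k - \boldsymbol{\mu}_i\|^2$, each $\boldsymbol{\mu}_i$ can be optimized on its own. Each such term is a strictly convex quadratic in $\boldsymbol{\mu}_i$ with Hessian $2 N_{\mathbf{C}_i}\mathbf{I}$, so setting the gradient $-2\sum_{\mathbf{p}_k \in \mathcal{C}_i}(\mathbf{p}_k - \boldsymbol{\mu}_i)$ to zero yields the unique stationary point $\boldsymbol{\mu}_i = \frac{1}{N_{\mathbf{C}_i}}\sum_{\mathbf{p}_k \in \mathcal{C}_i}\mathbf{p}_k$, the arithmetic mean. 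Equivalently, and perhaps more transparently, I would invoke the parallel-axis identity $\sum_{\mathbf{p}_k \in \mathcal{C}_i}\|\mathbf{p}_k - \boldsymbol{\mu}_i\|^2 = \sum_{\mathbf{p}_k \in \mathcal{C}_i}\|\mathbf{p}_k - \bar{\mathbf{p}}_i\|^2 + N_{\mathbf{C}_i}\|\bar{\mathbf{p}}_i - \boldsymbol{\mu}_i\|^2$, where $\bar{\mathbf{p}}_i$ denotes the cluster mean; the first term is independent of $\boldsymbol{\mu}_i$ and the second is nonnegative and vanishes only at $\boldsymbol{\mu}_i = \bar{\mathbf{p}}_i$, establishing both optimality and uniqueness at once. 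This proves condition (1).

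Next I would fix the centroids and minimize over assignments. Here the key observation is that disjointness of the clusters, $\mathcal{C}_i \cap \mathcal{C}_j = \emptyset$, means every $\mathbf{p}_k$ contributes exactly one term $\|\mathbf{p}_k - \boldsymbol{\mu}_{c(k)}\|^2$ to $J$, so $J$ decouples across data points and can be minimized termwise. Since the contribution of $\mathbf{p}_k$ is minimized precisely by choosing the cluster index $c(k) \in \arg\min_j \|\mathbf{p}_k - \boldsymbol{\mu}_j\|^2$, the aggregate minimizer is the nearest-centroid assignment, which is condition (2).

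Finally, to assemble the biconditional: if $(\boldsymbol{\mathcal{C}}, \{\boldsymbol{\mu}_i\})$ minimizes $J$, it must be optimal with respect to each block while the other is held fixed, and the two steps above force (1) and (2); conversely, (1) and (2) are exactly these two blockwise optimality conditions. I expect the main obstacle to be conceptual rather than computational: I must be careful that the ``if and only if'' is the coordinatewise (alternating-minimization) optimality statement and not a claim that the conditions certify the global minimum — the latter is false, since $J$ admits multiple local minima and the joint problem is combinatorially hard. Making this distinction explicit, and pinpointing that the partition's disjointness is exactly what licenses the termwise decoupling in the assignment step, is where the argument needs the most care.
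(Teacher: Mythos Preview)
Your proposal is correct and follows essentially the same two-block decomposition as the paper's proof: fix the partition and optimize centroids via the first-order condition to obtain the mean, then fix centroids and optimize assignments termwise to obtain the nearest-centroid rule. Your additions---the parallel-axis identity for uniqueness and the explicit caveat that the biconditional characterizes coordinatewise rather than global optimality---are refinements the paper does not make, but the core argument is the same.
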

\begin{proof}[\textbf{Proof}]\mbox{}\\
\textbf{Case 1:} Minimization with respect to the centroid for fixed cluster \(\mathcal{C}_i\).\\[1mm]
Consider a cluster \(\mathcal{C}_i \subset \{\mathbf{p}_k\}_{k=1}^{N}\) and let \(\mathbf{c} \in \mathbb{R}^{p^2}\) be an arbitrary candidate for its center. Define the function  
\begin{align*}
f_i(\mathbf{c}) = \sum_{\mathbf{p}_k \in \mathcal{C}_i} \|\mathbf{p}_k - \mathbf{c}\|^2.
\end{align*}
Expanding the squared norm yields  
\begin{align*}
\|\mathbf{p}_k - \mathbf{c}\|^2 = \langle \mathbf{p}_k - \mathbf{c},\, \mathbf{p}_k - \mathbf{c} \rangle = \|\mathbf{p}_k\|^2 - 2\langle \mathbf{p}_k, \mathbf{c} \rangle + \|\mathbf{c}\|^2,
\end{align*}
where \(\langle \cdot \rangle\) is the inner product. It then follows that
\begin{align*}
f_i(\mathbf{c}) = \sum_{\mathbf{p}_k \in \mathcal{C}_i} \|\mathbf{p}_k\|^2 - 2 \left\langle \sum_{\mathbf{p}_k \in \mathcal{C}_i} \mathbf{p}_k, \mathbf{c} \right\rangle + \sum_{\mathbf{p}_k \in \mathcal{C}_i} \|\mathbf{c}\|^2.
\end{align*}
Since \(\|\mathbf{c}\|^2\) is independent of the summation index and there are \(N_{\mathbf{C}_i}\) elements in \(\mathcal{C}_i\), this becomes  
\begin{align*}
f_i(\mathbf{c}) = \sum_{\mathbf{p}_k \in \mathcal{C}_i} \|\mathbf{p}_k\|^2 - 2 \left\langle \sum_{\mathbf{p}_k \in \mathcal{C}_i} \mathbf{p}_k, \mathbf{c} \right\rangle + N_{\mathbf{C}_i}\|\mathbf{c}\|^2.
\end{align*}
The term \(\sum_{\mathbf{p}_k \in \mathcal{C}_i} \|\mathbf{p}_k\|^2\) does not depend on \(\mathbf{c}\). Hence, to minimize \(f_i(\mathbf{c})\), it suffices to minimize  
\begin{align*}
g_i(\mathbf{c}) = -2 \left\langle \sum_{\mathbf{p}_k \in \mathcal{C}_i} \mathbf{p}_k, \mathbf{c} \right\rangle + N_{\mathbf{C}_i}\, \|\mathbf{c}\|^2.
\end{align*}
Taking the gradient with respect to \(\mathbf{c}\), we have  
\begin{align*}
\nabla_{\mathbf{c}} g_i(\mathbf{c}) = -2 \sum_{\mathbf{p}_k \in \mathcal{C}_i} \mathbf{p}_k + 2N_{\mathbf{C}_i}\, \mathbf{c}.
\end{align*}
Setting the gradient equal to the zero vector and solving for \(\mathbf{c}\) yields  
\begin{align*}
\mathbf{c} = \frac{1}{N_{\mathbf{C}_i}} \sum_{\mathbf{p}_k \in \mathcal{C}_i} \mathbf{p}_k.
\end{align*}
Hence, the optimal centroid for cluster \(\mathcal{C}_i\) is  
\begin{align*}
\boldsymbol{\mu}_i = \frac{1}{N_{\mathbf{C}_i}} \sum_{\mathbf{p}_k \in \mathcal{C}_i} \mathbf{p}_k.
\end{align*}
\\
\textbf{Case 2:} Optimality of the assignment.\\[1mm]
Assume that the centroids \(\{\boldsymbol{\mu}_i\}_{i=1}^{K}\) are fixed as above. For each patch \(\mathbf{p}_k\), its contribution to the overall variance \(J\) is
\begin{align*}
\|\mathbf{p}_k - \boldsymbol{\mu}_i\|^2 \quad \text{if} \quad \mathbf{p}_k \in \mathcal{C}_i.
\end{align*}
To minimize \(J\), for every patch \(\mathbf{p}_k\) we must choose the cluster \(C_i\) such that
\begin{align*}
\|\mathbf{p}_k - \boldsymbol{\mu}_i\| \le \|\mathbf{p}_k - \boldsymbol{\mu}_j\| \quad \text{for all } j \neq i.
\end{align*}
That is, each patch \(\mathbf{p}_k\) should be assigned to the cluster whose centroid is nearest in the Euclidean sense. If a patch were assigned to any other cluster, its squared distance (and hence its contribution to \(J\)) would be larger, contradicting the minimality of \(J\).
\end{proof}
Algorithm~\ref{alg:patch_grouping} describes how to extract overlapping patches from a medical image and group similar patches using k-means clustering. It iteratively assigns patches to the nearest centroid and updates centroids until convergence.
\begin{algorithm}[!htbp]
\caption{Grouping patches using K-means}\label{alg:patch_grouping}
\begin{algorithmic}[1]
\State \textbf{Input:} medical image $\mathbf{A} \in \mathbb{R}^{m \times n}$, patch size $p$, number of clusters $K$
\State \textbf{Output:} Clusters $\{\mathcal{C}_1, \mathcal{C}_2, \dots, \mathcal{C}_K\}$ of vectorized patches
\State $\mathcal{P} \gets \emptyset$
\For{each valid top-left coordinate $(i,j)$ in $\mathbf{A}$}
    \State $\mathbf{P}_k \gets \mathbf{A}[i : i+p-1,\; j : j+p-1]$
    \State $\mathbf{p}_k \gets \operatorname{vec}(\mathbf{P}_k) \in \mathbb{R}^{p^2}$
    \State $\mathcal{P} \gets \mathcal{P} \cup \{\mathbf{p}_k\}$
\EndFor
\State Initialize centroids $\{\boldsymbol{\mu}_1, \boldsymbol{\mu}_2, \dots, \boldsymbol{\mu}_K\}$ randomly chosen from $\mathcal{P}$.
\Repeat
    \For{each patch \(\mathbf{p}_k \in \mathcal{P}\)}
        \State Compute distances: \(d(j) = \|\mathbf{p}_k - \boldsymbol{\mu}_j\|_2\) for \(j=1,\dots,K\) \Comment{\(d(j)\) is the Euclidean distance between patch \(\mathbf{p}_k\) and centroid \(\boldsymbol{\mu}_j\)}
        \State Let \(i = \operatorname{argmin}_{1 \le j \le K}\, d(j)\)
        \State Assign \(\mathbf{p}_k\) to cluster \(\mathcal{C}_i\)
    \EndFor
    \For{each cluster \(\mathcal{C}_i\), \(i=1,\dots,K\)}
        \State Update centroid: 
        \[
        \boldsymbol{\mu}_i \gets \frac{1}{N_{\mathbf{C}_i}} \sum_{\mathbf{p}_k \in \mathcal{C}_i} \mathbf{p}_k
        \]
    \EndFor
\Until{the cluster assignments do not change}
\State \Return \(\{\mathcal{C}_1, \mathcal{C}_2, \dots, \mathcal{C}_K\}\)
\end{algorithmic}
\end{algorithm}
\subsection{Cluster-dependent SVD}
Once the patches have been grouped into clusters, each cluster is represented as a matrix. For a cluster \(\mathcal{C}_k\) containing \(N_{\mathbf{C}_k}\) patches, each represented by the vector \(\mathbf{p}_i \in \mathbb{R}^{p^2}\), the corresponding medical image representation is given by
\begin{align}
\mathbf{C}_k = \left(\begin{array}{cccc} \mathbf{p}_1^T & \mathbf{p}_2^T & \cdots & \mathbf{p}_{N_{\mathbf{C}_k}}^T \end{array}\right)^T \in \mathbb{R}^{N_{\mathbf{C}_k} \times p^2},
\end{align}
where each row corresponds to a vectorized patch. The aggregated cluster matrix is then constructed by stacking all cluster matrices. That is,
\begin{align}
\mathbf{C} =  \left(\begin{array}{cccc} \mathbf{C}_1 & \mathbf{C}_2 & \cdots & \mathbf{C}_K \end{array}\right)^T \in \mathbb{R}^{N \times p^2}.
\end{align}
Let \(\Gamma\) be the SVD operator for any matrix \(\mathbf{X}\), defined as
\begin{align}
\Gamma(\mathbf{X}) = \{\mathbf{U}_{r_g},\mathbf{\Sigma}_{r_g},\mathbf{V}_{r_g}\} \quad \text{such that} \quad \mathbf{X} = \mathbf{U}_{r_g}\,\mathbf{\Sigma}_{r_g}\,\mathbf{V}_{r_g}^T.
\end{align}
In our framework, \(\Gamma\) is applied independently on each cluster matrix, so that the aggregated SVD is given by
\begin{align}
\Gamma(\mathbf{C}) = \{\Gamma(\mathbf{C}_k)\}_{k=1}^{K}\quad\text{where}\quad \Gamma(\mathbf{C}_k) = \{\mathbf{U}_{r_{\mathbf{C}_k}},\mathbf{\Sigma}_{r_{\mathbf{C}_k}},\mathbf{V}_{r_{\mathbf{C}_k}}\}.
\end{align}
This implies that the distribution of singular values is now cluster-dependent. Clusters with high patch similarity exhibit lower rank, whereas clusters with lower patch similarity exhibit higher rank. A lower rank corresponds to a higher degree of compression, while a higher rank implies reduced compression. That means, for any two randomly selected cluster \(\mathbf{C}_i\) and \(\mathbf{C}_j\) (\(i\neq j\)) with patches in \(\mathbf{C}_i\) exhibiting low-rank and \(\mathbf{C}_j\) exhibiting high rank, we have 
\begin{align}
    r_{\mathbf{C}_i}\ll r_{\mathbf{C}_j}.
\end{align}
For a fixed energy threshold \(\alpha\). It follows that
\begin{align}
    CF_{\mathbf{C}_i}(p, N_{\mathbf{C}_j}, r_{\mathbf{C}_j})\gg CF_{\mathbf{C}_j}(p, N_{\mathbf{C}_j}, r_{\mathbf{C}_j}).
\end{align}
Since cluster \(\mathbf{C}_i\) has a more homogeneous structure than cluster \(\mathbf{C}_j\), its singular values decay more rapidly. This allows for a lower truncation rank \(r_{\mathbf{C}_i}\), leading to a much higher compression factor for \(\mathbf{C}_i\) compared to \(\mathbf{C}_j\). Consequently, the discarded singular values in \(\mathbf{C}_i\) contribute significantly less to the reconstruction error than those in \(\mathbf{C}_j\), resulting in a lower compression loss. That is
\begin{align} \label{cluster-compression} 
\|\mathbf{C}_i - \widehat{\mathbf{C}}_{i}\|_F^2 \ll \|\mathbf{C}_j - \widehat{\mathbf{C}}_{j}\|_F^2,  
\end{align}
where \(\widehat{\mathbf{C}}_i\) denotes the compressed version of the cluster matrix \(\mathbf{C}_i\). Eq.~\ref{cluster-compression} expands to
\begin{align}  
\sum_{k=r_{\mathbf{C}_i}+1}^{\min\{N_{\mathbf{C}_i},p^2\}} \sigma_{\mathbf{C}_i, k}^2 \ll \sum_{k=r_{\mathbf{C}_j}+1}^{\min\{N_{\mathbf{C}_j},p^2\}} \sigma_{\mathbf{C}_j, k}^2.  
\end{align}  

Fig.~\ref{fig:components retained} shows two randomly selected clusters, \(\mathbf{C}_{0}\) and \(\mathbf{C}_{2}\). In this figure, the x-axis represents the number of singular values retained, while the left y-axis displays the cumulative energy captured as these singular values are accumulated, and the right y-axis indicates the corresponding compression factor. Unlike \(r_g\), \(r_{\mathbf{C}_k}\) is determined individually for each cluster based on its specific information content. The cumulative energy curves for two representative clusters show that the number of singular values needed to reach a target energy threshold (e.g., 95\%) varies across clusters. 
\begin{figure*}
    \centering
    \includegraphics[width=.9\textwidth]{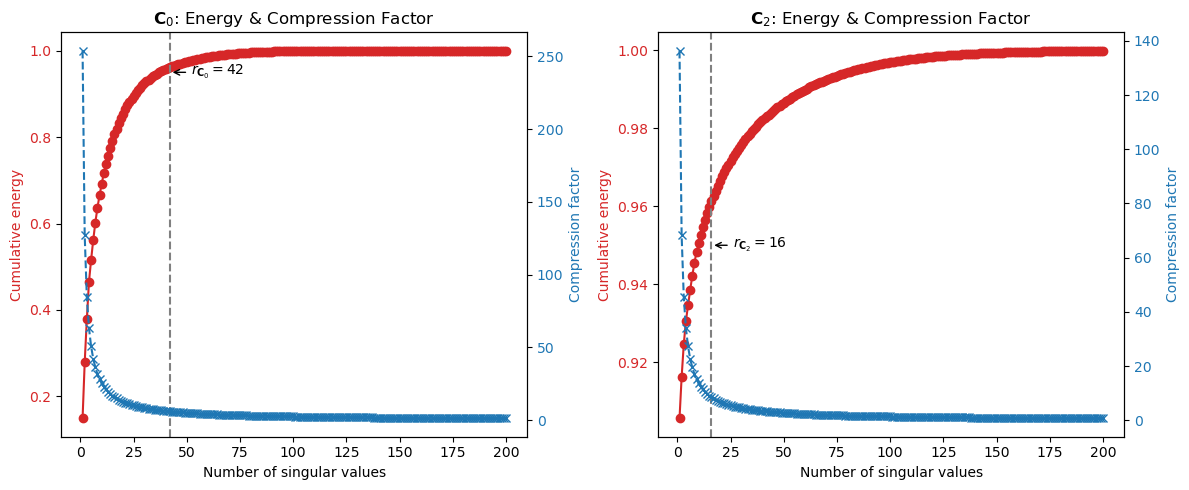}
    \caption{These plots compare how the number of retained singular values (horizontal axis) affects the cumulative energy (red curve, left vertical axis) and the compression factor (blue curve, right vertical axis) for two randomly selected cluster matrices, \(\mathbf{C}_{0}\) (left) and \(\mathbf{C}_{2}\) (right). The dashed vertical lines indicate the minimal number of singular values, \(r_{\mathbf{C}_k}\), needed for 95\% energy retention.}
    \label{fig:components retained}
\end{figure*}

Fig.~\ref{fig:intra-cluster variability} shows the intra-cluster variability, measured using the Frobenius norm, across different number of clusters \(K\). The x-axis represents the number of clusters, while the y-axis quantifies the intra-cluster variability. The results demonstrates that when \( K \) is small, patches with different features are grouped in the same cluster \( \mathcal{C}_k \) resulting in cluster matrices \( \mathbf{C}_k \in \mathbb{R}^{N_{\mathbf{C}_k} \times p^2} \) that exhibit high variability. The variability within \( \mathbf{C}_k \) limits the effectiveness of the low-rank approximation as the SVD of \( \mathbf{C}_k \) cannot efficiently isolate meaningful low-rank structures. In particular, \(\|\mathbf{C}_k - \widehat{\mathbf{C}}_{k}\|_F^2\) remains large for small \( r_{\mathbf{C}_k} \), which leads to poor reconstruction fidelity. As a result, local matrix details and structural information are insufficiently captured, leading to artifacts or blurring in the reconstructed matrix. Conversely, as \( K \) increases the patches within each cluster become more homogeneous reducing the variability in \( \mathbf{C}_k \). This makes it easier to maintain fewer singular values \( r_{\mathbf{C}_k} \) while still achieving accurate approximations. This is reflected in the decay of the singular values, where for more homogeneous matrices the tail singular values \( \{ \sigma_{\mathbf{C}_k, i} \}_{i=r_{\mathbf{C}_i}+1}^{\min\{N_{\mathbf{C}_k}, p^2\}} \) are smaller, reducing the reconstruction error. A larger \( K \) leads to better medical image quality, as the local and non-local self-similarity of the patches is effectively utilized.
\begin{figure}
    \centering
    \includegraphics[width=.9\linewidth]{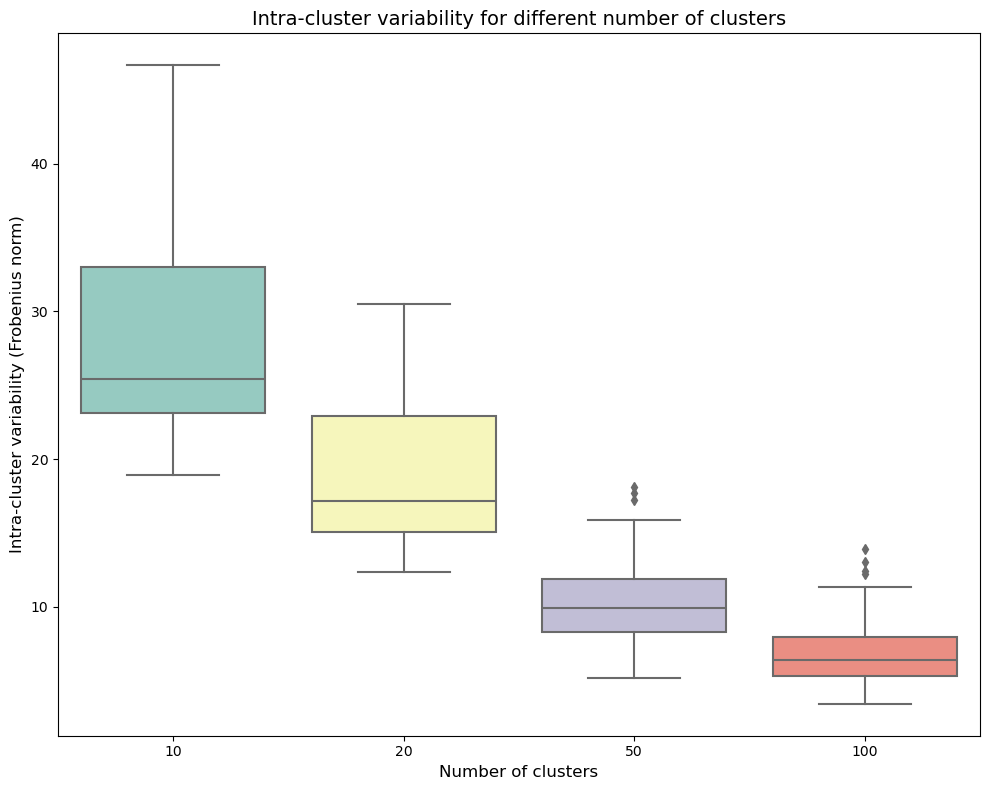}
    \caption{Boxplot showing the intra-cluster variability (Frobenius norm) for different numbers of clusters. As \(K\) increases, the intra-cluster variability decreases, showing a more homogeneous clustering of patches. Smaller \(K\) values lead to higher variability as different patch features are grouped together, while larger \(K\) leads to lower variability as patches with similar features are better captured.}
    \label{fig:intra-cluster variability}
\end{figure}

Fig.~\ref{fig: sv decay} demonstrates heatmaps of the normalized singular value decay for clusters corresponding to different patch sizes. Each subfigure shows both the overall decay pattern and a detailed zoomed-in view of the first 16 singular values, with dashed lines showing regions of interest. This result shows that when \( K \) is optimal, the homogeneity of clusters is significantly influenced by the patch size \( p ^2 \). Smaller patch sizes lead to higher cluster homogeneity by minimizing within-cluster variance and enabling faster singular value decay. Larger patches, on the other hand, exhibit reduced homogeneity due to their inherent variability, which affects the efficiency of cluster-based SVD and the overall quality of the compressed matrix \cite{li2021image}.
\begin{figure*}
    \centering
    \includegraphics[width=.9\textwidth]{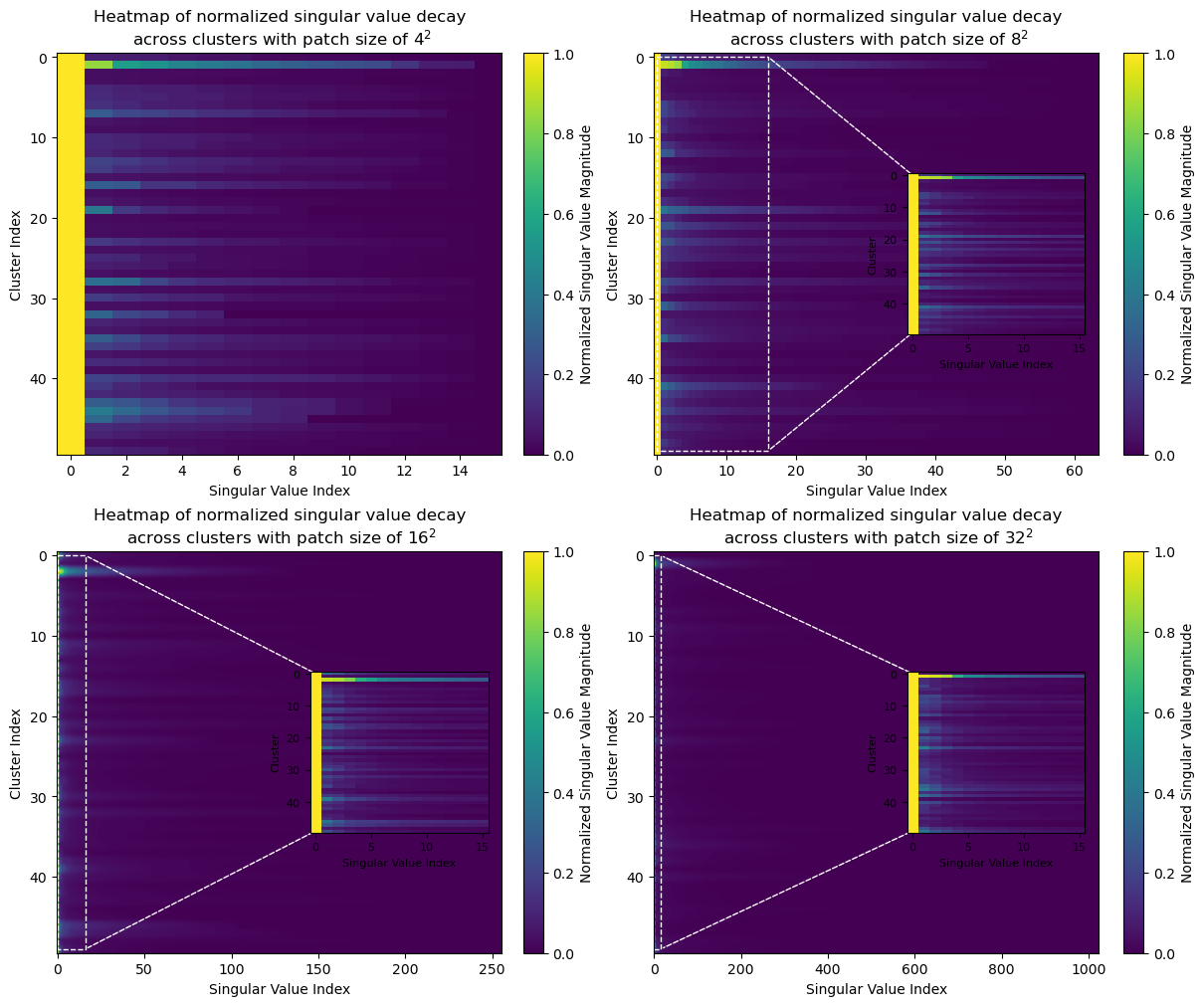}
    \caption{Heatmaps of normalized singular value decay across clusters for various patch sizes. Each subfigure corresponds to a different patch size: \(4^2\), \(8^2\), \(16^2\), and \(32^2\). The main heatmaps illustrate the singular value distribution for all clusters, with zoomed-in views showing the first 16 singular values. Dashed lines connect the highlighted region in the main heatmap to its corresponding zoomed-in view.}
    \label{fig: sv decay}
\end{figure*}
\subsection{The overall compression factor}{\label{sec:compression}}
The compression factor of the medical image \( \mathbf{C}_k \) for cluster \( \mathcal{C}_k \) is determined by the ratio between the number of elements in the original matrix and the number of elements retained after applying truncated SVD. The medical image \( \mathbf{C}_k \in \mathbb{R}^{N_{\mathbf{C}_k} \times p^2} \) contains \( N_{\mathbf{C}_k} \times p^2 \) elements before compression. After applying SVD and retaining only the top \( r_{\mathbf{C}_k} \) singular values, the number of elements in the compressed representation is reduced to \( r_{\mathbf{C}_k} (N_{\mathbf{C}_k} + p^2 + 1) \), which includes \( r_{\mathbf{C}_k} \) singular values, \( r_{\mathbf{C}_k} \times N_{\mathbf{C}_k} \) elements from the left singular matrix \( \mathbf{U}_{r_{\mathbf{C}_k}} \), and \( r_{\mathbf{C}_k} \times p^2 \) elements from the right singular matrix \( \mathbf{V}_{r_{\mathbf{C}_k}}\). Therefore if the original matrix is split into non-overlapping patches then the compression factor for cluster \( \mathcal{C}_k \) is given by
\begin{align}
CF_{\mathbf{C}_k}(p, N_{\mathbf{C}_k},r_{\mathbf{C}_k}) = \frac{ N_{\mathbf{C}_k}  p^2}{r_{\mathbf{C}_k} (N_{\mathbf{C}_k} + p^2 + 1)},    
\end{align}
 and hence the overall compression factor, \(CF_{\text{overall}}(p, K, \alpha)\) is computed as
\begin{align}
    CF_{\text{overall}}(p, K, \alpha)&=\frac{ N p^2}{\sum_{k=1}^{K}r_{\mathbf{C}_k} (N_{\mathbf{C}_k} + p^2 + 1)}\label{overallCF}.
\end{align}
When a matrix is split into overlapping patches, the total number of patches \(N_{\mathbf{C}_k}\) for cluster \( \mathcal{C}_k \) increases due to redundancy, as overlapping regions lead to multiple patches sharing the same data values. To account for the redundancy introduced by overlapping patches, the overlap proportion, \( \beta_{\mathbf{C}_k} \), is calculated as the fraction of overlapping data relative to the total data across all patches in cluster \(\mathcal{C}_k\), that is 
\begin{align}{\label{beta_k}}
    \beta_{\mathbf{C}_k}=\frac{\sum_{i,j}\mathbb{I}(\mathbf{O}_{\mathbf{C}_k}(i,j)>1)}{p^2N_{\mathbf{C}_k}},
\end{align}
where \(\mathbf{O}_{\mathbf{C}_k}\) denotes the overlap map for cluster \(\mathcal{C}_k\), which indicates the number of patches from cluster \(\mathcal{C}_k\) that overlap at \((i,j)\) and \(\mathbb{I}(\cdot)\) denotes an indicator function that equals 1 if \((i,j)\) is covered by more than one patch from cluster \(\mathcal{C}_k\), and 0 otherwise. Using Eq.~\ref{beta_k} the effective number of patches \( N^{\text{eff}}_{\mathbf{C}_k} \) for cluster \( \mathcal{C}_k \), which is defined as an adjustment of the total number of patches, \( N_{\mathbf{C}_k} \), by considering the proportion of overlapping data is computed as 
\begin{align}{\label{unique patches}}
\displaystyle N^{\text{eff}}_{\mathbf{C}_k} \approx \lceil N_{\mathbf{C}_k}  (1 - \beta_{\mathbf{C}_k}) \rceil,
\end{align}
where \( (1 - \beta_{\mathbf{C}_k}) \) represents the proportion of unique, non-overlapping data. Eq.~\ref{unique patches} assumes that the effective patch count decreases linearly with overlap. Algorithm~\ref{alg:overlap_eff_patches} describes how to calculate the overlap proportion and adjust the number of patches to ensure that the redundancy introduced by overlapping patches is properly accounted for when calculating the overall compression factor. 
\begin{algorithm}[!htbp]
\caption{Calculation of \(\beta_{\mathbf{C}_k}\) and \(N^{\text{eff}}\)}
\label{alg:overlap_eff_patches}
\begin{algorithmic}[1]
\State \textbf{Input:} \(\{\mathcal{P}_{\mathbf{C}_k}\}_{k=1}^K\), \(p^2\), \(\text{matrix\_shape}\), \(K\)
\State \textbf{Output:} \(\{\beta_{\mathbf{C}_k}\}_{k=1}^K\), \(N^{\text{eff}}\)
\State \(\beta_{\mathbf{C}_k} \gets 0 \quad  \forall k \in \{1, \dots, K\}\)
\State \(N^{\text{eff}} \gets 0\)
\For{$k \gets 1$ to $K$}
    \State \(\mathbf{O}_{\mathbf{C}_k} \gets \text{zeros}(\text{matrix\_shape})\) \Comment{\(\mathbf{O}_{\mathbf{C}_k}\) is the overlap map for cluster \(\mathcal{C}_k\)}
    \For{$(y, x) \in \mathcal{P}_{\mathbf{C}_k}$} \Comment{\(\mathcal{P}_{\mathbf{C}_k}\) is the set of patch positions for cluster \(\mathcal{C}_k\)}
        \State Increment \(\mathbf{O}_{\mathbf{C}_k}[y:y+p, x:x+p]\) by \(1\) \Comment{Mark overlapping pixels}
    \EndFor
    \State \(\text{overlapping\_pixels}_k \gets \sum (\mathbf{O}_{\mathbf{C}_k} > 1)\) \Comment{Count overlapping pixels}
    \State \(\text{total\_pixels}_k \gets p^2\vert\mathcal{P}_{\mathbf{C}_k} \vert\) \Comment{Total pixels in cluster \(\mathcal{C}_k\) (note $\vert\mathcal{P}_{\mathbf{C}_k}\vert=N_{\mathbf{C}_k}$)}
    \State \(\beta_{\mathbf{C}_k} \gets \frac{\text{overlapping\_pixels}_k}{\text{total\_pixels}_k}\) \Comment{Overlap proportion (Eq.~\ref{beta_k})}
    \State \(N^{\text{eff}}_{\mathbf{C}_k} \gets \vert \mathcal{P}_{\mathbf{C}_k}\vert (1 - \beta_{\mathbf{C}_k})\) \Comment{Effective patches for cluster \(\mathcal{C}_k\) (Eq.~\ref{unique patches})}
    \State \(N^{\text{eff}} \gets N^{\text{eff}} + N^{\text{eff}}_{\mathbf{C}_k}\) \Comment{Summing effective patches (Eq.~\ref{total effective})}
\EndFor
\State \Return \(\{\beta_{\mathbf{C}_k}\}_{k=1}^K\), \(N^{\text{eff}}\)
\end{algorithmic}
\end{algorithm}
This adjustment ensures that \( N^{\text{eff}}_{\mathbf{C}_k} \) more accurately reflects the true amount of information available in cluster \(\mathcal{C}_k\). The effective compression factor for cluster \( \mathcal{C}_k \), \(CF_{\mathbf{C}_k}^{\text{eff}}(p, \alpha, K) \), is approximated as
\begin{align}
CF_{\mathbf{C}_k}^{\text{eff}}(r_{\mathbf{C}_k}, N_{\mathbf{C}_k}^{\text{eff}},p) \approx \frac{N^{\text{eff}}_{\mathbf{C}_k}  p^2}{r_{\mathbf{C}_k}  (N_{\mathbf{C}_k} + p^2+1)}.
\end{align}
It follows that the effective overall compression factor, \( CF_{\text{overall}}^{\text{eff}}(p, \alpha, K) \), is approximated by
\begin{align}{\label{overall eff}}
CF_{\text{overall}}^{\text{eff}}(p, \alpha, K) \approx \frac{N^{\text{eff}}  p^2}{\sum_{k=1}^{K}r_{\mathbf{C}_k}  (N_{\mathbf{C}_k} + p^2+1)}
\end{align}
where \( N^{\text{eff}} \) denotes the effective number of patches, accounting for the unique, non-overlapping data across all clusters by adjusting for the overlap proportion and is defined as
\begin{align}{\label{total effective}}
N^{\text{eff}} &= \sum_{k=1}^{K} N_{k}^{\text{eff}}\\
&= \sum_{k=1}^{K}\lceil N_{\mathbf{C}_k}  \left( 1 - \beta_{\mathbf{C}_k} \right)\rceil.
\end{align}
The global and cluster-based SVD achieve compression differently, their compression factors are rarely equal, therefore a compression threshold \(CF_{\text{threshold}}\), is calculated to keep them approximately equal.
\newpage

\begin{proposition}In general, there does not always exist 
\begin{align*}
r_g \in \mathbb{N}
\end{align*}
such that
\begin{align*}
CF_{\text{global}}(r_g) \approx CF_{\text{overall}}^{\text{eff}}(p, \alpha, K).
\end{align*}
\end{proposition}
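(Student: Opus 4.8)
The plan is to read the proposition as a statement about the solvability of a single equation over the integers, and to reduce non-existence to the quantization of the global compression factor. First I would set the two factors equal, using $CF_{\text{global}}(r_g)=mn/\bigl(r_g(m+n+1)\bigr)$ from Eq.~\ref{components} and the effective overall factor from Eq.~\ref{overall eff}. Since $CF_{\text{global}}$ is strictly monotone decreasing in $r_g$, this equation has a unique real root
\[
r_g^{\star}=\frac{mn\sum_{k=1}^{K}r_{\mathbf{C}_k}\bigl(N_{\mathbf{C}_k}+p^2+1\bigr)}{(m+n+1)\,N^{\text{eff}}\,p^2},
\]
which is the only candidate at which exact equality could hold. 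The whole argument then reduces to showing that this candidate generically violates the two constraints any admissible global rank must satisfy: $r_g$ must be a positive integer, and it must lie in the valid truncation range $1<r_g<n$. Because the numerator and denominator above are integers (recall $N^{\text{eff}}$ is a sum of ceilings by Eq.~\ref{total effective} and each $r_{\mathbf{C}_k},N_{\mathbf{C}_k}$ is an integer), $r_g^{\star}$ is rational and generically non-integral, so exact equality already fails for most configurations.

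Next I would handle the genuine (approximate) version by exploiting the discreteness of the attainable global factors. Since $r_g$ ranges only over $\mathbb{N}$, the values $CF_{\text{global}}(1)>CF_{\text{global}}(2)>\cdots$ form a decreasing sequence with gaps
\[
\Delta(r_g)=CF_{\text{global}}(r_g)-CF_{\text{global}}(r_g+1)=\frac{mn}{m+n+1}\cdot\frac{1}{r_g(r_g+1)},
\]
which are widest precisely in the high-compression, small-$r_g$ regime. Fixing a tolerance $\epsilon$ to make $\approx$ precise, I would argue that whenever the target $CF_{\text{overall}}^{\text{eff}}$ lands in the interior of a gap of width exceeding $2\epsilon$, the nearest attainable global factor is more than $\epsilon$ away, so no integer $r_g$ satisfies $CF_{\text{global}}(r_g)\approx CF_{\text{overall}}^{\text{eff}}$. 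Since cluster-based SVD compresses homogeneous clusters aggressively (cf.\ Eq.~\ref{cluster-compression}), its effective factor typically falls into this small-$r_g$, wide-gap region, making the mismatch the rule rather than the exception.

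To turn this into a concrete witness for the ``does not always exist'' claim rather than a generic remark, I would also record a range argument. The attainable global factors are confined to the interval $[\,m/(m+n+1),\,mn/(m+n+1)\,]$, realized at $r_g=n$ and $r_g=1$ respectively. If a cluster configuration $(p,\alpha,K)$ drives $CF_{\text{overall}}^{\text{eff}}$ above $mn/(m+n+1)$ — achievable when several clusters are nearly rank one, so the denominator in Eq.~\ref{overall eff} collapses toward $\sum_k (N_{\mathbf{C}_k}+p^2+1)$ — then even $r_g=1$ undershoots the target and no admissible rank reaches it at any tolerance below the shortfall. Exhibiting one such triple together with an image whose clusters have rapidly decaying spectra then furnishes an explicit instance, which suffices because a single counterexample proves the statement.

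The main obstacle I anticipate is not the algebra but pinning down the meaning of $\approx$: the claim is vacuous if the tolerance may be taken arbitrarily large and trivially true if it is sent to zero, so the argument must fix a reasonable $\epsilon$ (for instance tied to the gap width $\Delta(r_g)$ at the operating point) and verify the mismatch persists at that scale. A secondary subtlety is that both $r_g^{\star}$ and the gap widths depend on the cluster ranks $r_{\mathbf{C}_k}$, which are themselves functions of $\alpha$ and the underlying image; I would therefore phrase the decisive step as a construction — producing one explicit configuration in which $r_g^{\star}\notin\mathbb{N}$ (or $r_g^{\star}\notin(1,n)$) while the surrounding gap exceeds the chosen $\epsilon$ — since that single instance is enough to establish that a matching $r_g$ need not exist.
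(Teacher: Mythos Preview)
Your proposal is correct and shares the paper's core idea: the attainable global compression factors form a finite discrete set, so a target value landing in a sufficiently wide gap (relative to the chosen tolerance) admits no integer $r_g$. The paper's proof stays abstract---it names the discrete set $\mathcal{S}=\{f(r_g):1\le r_g\le\min(m,n)\}$, its minimum separation $\delta$, and then splits on whether $C\in\mathcal{S}$ and whether $\delta_0=\min_{x\in\mathcal{S}}|x-C|$ exceeds $\epsilon$, concluding that exact membership requires pathologically uniform patch statistics and that otherwise non-existence occurs whenever $\delta_0>\epsilon$.

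Your route is more constructive: you solve explicitly for the unique real root $r_g^{\star}$, compute the consecutive gaps $\Delta(r_g)=\frac{mn}{(m+n+1)}\cdot\frac{1}{r_g(r_g+1)}$, and add a range argument showing that $CF_{\text{overall}}^{\text{eff}}$ can exceed $CF_{\text{global}}(1)=mn/(m+n+1)$ outright. The paper never writes down $r_g^{\star}$, never quantifies the gaps, and never makes the out-of-range observation. What your approach buys is a concrete mechanism for producing witnesses (pick clusters with rapidly decaying spectra so that the effective factor lands in a wide gap or above the maximum), which is sharper than the paper's appeal to $C$ being ``arbitrary with respect to $\mathcal{S}$.'' What the paper's version buys is brevity: its case split on $\delta_0$ versus $\epsilon$ dispatches the logical content in a few lines without any computation. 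Your closing remarks on the semantics of $\approx$ are well placed---the paper handles this implicitly by fixing $\epsilon$ and comparing it to $\delta_0$, which is exactly the move you anticipate.
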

\begin{proof}[\textbf{Proof}]\mbox{}\\ 
Define the function
\begin{align*}
f(r_g) = \frac{mn}{r_g(m+n+1)},
\end{align*}
and let
\begin{align*}
C = CF_{\text{overall}}^{\text{eff}}(p, \alpha, K) = \frac{N^{\text{eff}}\, p^2}{\sum_{k} r_{\mathbf{C}_k} (N_{\mathbf{C}_k}+p^2+1)}.
\end{align*}
Now, define the discrete set
\begin{align*}
\mathcal{S} = \{ f(r_g) : r_g \in \mathbb{N},\ 1 \leq r_g \leq \min(m,n) \}.
\end{align*}
Since $\mathcal{S}$ is a finite discrete set in $\mathbb{R}$, there exists a minimum separation
\begin{align*}
\delta = \min \{ \vert x-y \vert : x,y \in \mathcal{S},\ x\neq y \} > 0.
\end{align*}
We analyze two cases:\\
\\
\textbf{Case 1:} $C \in \mathcal{S}$.\\[1mm]
In this case, there exists $r_g^*$ such that
\begin{align*}
f(r_g^*) = C.
\end{align*}
Thus, we can select $r_g = r_g^*$, ensuring that
\begin{align*}
CF_{\text{global}}(r_g) = CF_{\text{overall}}^{\text{eff}}(p, \alpha, K).
\end{align*}
However, because \( C \) is derived from independently determined local quantities, the precise equality can only occur under very special and highly regular conditions. That is, the local patches across the entire medical image would need to be perfectly uniform, meaning that their characteristics such as texture, intensity, and structure are identical in every region. Second, the local compression parameters, including the number of patches \( N_{\mathbf{C}_k} \), the effective patch counts \( N^{\text{eff}}_{\mathbf{C}_k} \), and the singular value truncation ranks \( r_{\mathbf{C}_k} \) for each cluster, must be exactly the same or follow a perfectly regular pattern. Third, the energy thresholds used in the local SVD must align exactly with the global energy distribution so that the overall energy preservation in the patch-based method yields an effective compression factor \( CF_{\text{overall}}^{\text{eff}}(p, \alpha, K) \) that exactly matches one of the discrete values produced by \( f(r_g) \). Finally, the redundancy introduced by overlapping patches must be completely controlled so that the effective patch count \( N^{\text{eff}} \) matches the theoretical prediction from the global formulation.\\
\\
\textbf{Case 2:} $C \notin \mathcal{S}$.\\[1mm]
Since $\mathcal{S}$ is discrete, define
\begin{align*}
\delta_0 = \min_{x\in \mathcal{S}} \vert x-C \vert > 0.
\end{align*}
For any tolerance $\epsilon$, two possibilities arise:
\begin{enumerate}
    \item[(i)] If $\delta_0 > \epsilon$:\\[1mm]
    The tolerance is stricter than the best possible error achievable from $\mathcal{S}$. In this case, no $r_g$ satisfies
    \begin{align*}
    \vert f(r_g)-C\vert < \epsilon.
    \end{align*}
    That is, we cannot find an $r_g$ that makes the compression factors equal within the given tolerance.
    
    \item[(ii)] If $\delta_0 \leq \epsilon$:\\[1mm]
    The tolerance is loose enough to accept the best achievable approximation from $\mathcal{S}$. Then there exists some $r_g$ such that
    \begin{align*}
    \vert f(r_g)-C\vert \leq \epsilon.
    \end{align*}
    In this case, we can find an $r_g$ that brings the compression factors sufficiently close.
\end{enumerate}
Since the value of $C$ depends on local patch-based characteristics and is arbitrary with respect to $\mathcal{S}$, we conclude that there does not always exist an $r_g \in \mathbb{N}$ such that
\begin{align*}
CF_{\text{global}}(r_g) \approx CF_{\text{overall}}^{\text{eff}}(p, \alpha, K).
\end{align*}
\end{proof}
The \(CF_{\text{threshold}}\) is defined as the relative percentage difference between \(CF_{\text{overall}}^{\text{eff}}(p, \alpha, K)\) and \(CF_{\text{global}}(r_g)\). That is,
\begin{align}
    CF_{\text{threshold}} = \frac{\vert CF_{\text{overall}}^{\text{eff}}(p, \alpha, K) -CF_{\text{global}}(r_g)\vert}{CF_{\text{overall}}^{\text{eff}}(p, \alpha, K)},
\end{align}
where \(\vert \cdot \vert\) denotes the absolute value. \(CF_{\text{threshold}}\) specifies the maximum allowable value of this relative difference so that the CFs are considered approximately equal. For example, if \(CF_{\text{threshold}}\) is set to 0.15, this means that the CFs for global and cluster-based SVD must not differ by more than 15\%.\\
\\
\begin{proposition} Assume that for every cluster \(\mathcal{C}_k\), the number of patches \(N_{\mathbf{C}_k}(p)\) and the effective patch count \(N^{\mathrm{eff}}(p)\) are non-increasing in \(p\), and that
\begin{align*}
\phi_{\mathbf{C}_k}(p)=r_{\mathbf{C}_k}(p)\left(1+\frac{N_{\mathbf{C}_k}(p)+1}{p^2}\right)
\end{align*}
is non-increasing in \(p\). Then the effective overall compression factor \(CF^{\mathrm{eff}}_{\mathrm{overall}}(p,\alpha,K)\) is an increasing function of \(p\) for fixed \(\alpha\) and \(K\).
\end{proposition}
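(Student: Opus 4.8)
The plan is to strip the explicit \(p^2\) growth out of the ratio, since the same factor sits in both numerator and denominator and is therefore a red herring. Starting from Eq.~\ref{overall eff}, I would factor \(p^2\) from each denominator term,
\begin{align*}
r_{\mathbf{C}_k}\bigl(N_{\mathbf{C}_k}+p^2+1\bigr)=p^2\,r_{\mathbf{C}_k}\left(1+\frac{N_{\mathbf{C}_k}+1}{p^2}\right)=p^2\,\phi_{\mathbf{C}_k}(p),
\end{align*}
so that after cancelling the common \(p^2\) the compression factor collapses to the scalar quotient
\begin{align*}
CF^{\mathrm{eff}}_{\mathrm{overall}}(p,\alpha,K)=\frac{N^{\mathrm{eff}}(p)}{\sum_{k=1}^{K}\phi_{\mathbf{C}_k}(p)}=:\frac{N^{\mathrm{eff}}(p)}{\Phi(p)}.
\end{align*}
This reduces the whole claim to a one-dimensional monotonicity statement about the ratio \(N^{\mathrm{eff}}/\Phi\).

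First I would record the cheap facts: because each \(\phi_{\mathbf{C}_k}(p)\) is non-increasing, the finite sum \(\Phi(p)\) is non-increasing, and \(N^{\mathrm{eff}}(p)\) is non-increasing by hypothesis. The goal is that their quotient is non-decreasing, which --- comparing values at \(p_1<p_2\) and using \(\Phi>0\) --- is equivalent to
\begin{align*}
N^{\mathrm{eff}}(p_2)\,\Phi(p_1)\ge N^{\mathrm{eff}}(p_1)\,\Phi(p_2),
\end{align*}
or, in differential form, to the relative-rate inequality \(\tfrac{(N^{\mathrm{eff}})'}{N^{\mathrm{eff}}}\ge \tfrac{\Phi'}{\Phi}\). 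In words, the numerator must decay no faster, in the relative sense, than the denominator.

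The hard part will be exactly this relative-rate comparison, because a quotient of two non-increasing functions is not monotone in general, so the three hypotheses cannot be applied merely term by term. The structural lever I would exploit is that each \(\phi_{\mathbf{C}_k}(p)=r_{\mathbf{C}_k}(p)\bigl(1+(N_{\mathbf{C}_k}(p)+1)/p^2\bigr)\) decays for two compounding reasons: the truncation rank \(r_{\mathbf{C}_k}(p)\) falls as larger, more homogeneous clusters admit faster singular-value decay (cf. Fig.~\ref{fig: sv decay}), while the bracketed factor decreases monotonically toward \(1\) as \(p^2\) grows and \(N_{\mathbf{C}_k}\) shrinks. By contrast, \(N^{\mathrm{eff}}\) loses mass only through the drop in patch count. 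I would therefore try to bound \(\Phi'/\Phi\) below \((N^{\mathrm{eff}})'/N^{\mathrm{eff}}\) by attributing to \(\Phi\) the extra multiplicative decay carried by the bracket, using the assumed coupling between \(r_{\mathbf{C}_k}\), \(N_{\mathbf{C}_k}\) and \(N^{\mathrm{eff}}\). Pinning down this extra decay rigorously, rather than heuristically, is where the real work sits; the remaining algebra is routine.
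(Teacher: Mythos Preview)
Your approach is essentially the paper's: factor \(p^2\) out of each denominator term to expose \(\phi_{\mathbf{C}_k}(p)\), then study the resulting ratio. Your algebra is in fact cleaner --- you correctly cancel the common \(p^2\), arriving at \(CF^{\mathrm{eff}}_{\mathrm{overall}}=N^{\mathrm{eff}}(p)/\Phi(p)\), whereas the paper's rewriting leaves a stray \(p^2\) in the numerator after factoring it from the denominator.

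Where you diverge is in how seriously you treat the final step. You correctly observe that a quotient of two non-increasing functions need not be monotone, and you flag the relative-rate inequality \((N^{\mathrm{eff}})'/N^{\mathrm{eff}}\ge\Phi'/\Phi\) as the genuine obstruction that the stated hypotheses do not obviously deliver. The paper does not resolve this obstruction either: at exactly this point it simply inserts an extra running assumption, writing ``we assume that the decrease in the denominator which is driven by the quadratic decay in \(\psi_{\mathbf{C}_k}(p)\) dominates the decrease in \(N^{\mathrm{eff}}(p)\),'' and then reads off the conclusion. In other words, the paper's argument is heuristic at precisely the place you identified as hard; your plan to bound \(\Phi'/\Phi\) via the extra multiplicative decay in the bracket \(1+(N_{\mathbf{C}_k}+1)/p^2\) is more ambitious than what the paper actually does. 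So you have not missed an idea --- the paper's ``proof'' is really a plausibility argument, and you have located its soft spot accurately.
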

\begin{proof}[\textbf{Proof}]\mbox{}\\ Define
\begin{align*}
CF^{\mathrm{eff}}_{\mathrm{overall}}(p, \alpha, K)=\frac{N^{\mathrm{eff}}(p)\,p^2}{\displaystyle\sum_{k=1}^{K} r_{\mathbf{C}_k}(p)\bigl(N_{\mathbf{C}_k}(p)+p^2+1\bigr)}.
\end{align*}
Since
\begin{align*}
N_{\mathbf{C}_k}(p)+p^2+1 = p^2\left(1+\frac{N_{\mathbf{C}_k}(p)+1}{p^2}\right),
\end{align*}
we can rewrite the overall effective compression factor as follows:
\begin{align*}
CF^{\mathrm{eff}}_{\mathrm{overall}}(p, \alpha, K)=\frac{N^{\mathrm{eff}}(p)p^2}{\displaystyle\sum_{k=1}^{K} r_{\mathbf{C}_k}(p)\,\psi_{\mathbf{C}_k}(p)},
\end{align*}
where
\begin{align*}
\psi_{\mathbf{C}_k}(p)=1+\frac{N_{\mathbf{C}_k}(p)+1}{p^2}.
\end{align*}
For each \(k\) the function \(N_{\mathbf{C}_k}(p)\) is non-increasing in \(p\) while \(p^2\) is strictly increasing; hence, each \(\psi_{\mathbf{C}_k}(p)\) is strictly decreasing. Although the function \(r_{\mathbf{C}_k}(p)\) (which is determined by Eq.~\ref{threshold}) need not be monotonic in \(p\), we assume that for every \(k\), the combined function
\begin{align*}
\phi_{\mathbf{C}_k}(p)=r_{\mathbf{C}_k}(p)\,\psi_{\mathbf{C}_k}(p)
\end{align*}
is non-increasing in \(p\). That is, for any \(1<p_1<p_2\) we have
\begin{align*}
\phi_{\mathbf{C}_k}(p_1)=r_{\mathbf{C}_k}(p_1)\,\psi_{\mathbf{C}_k}(p_1)\ge r_{\mathbf{C}_k}(p_2)\,\psi_{\mathbf{C}_k}(p_2)=\phi_{\mathbf{C}_k}(p_2).
\end{align*}
The strict decrease of \(\psi_{\mathbf{C}_k}(p)\) due to the quadratic increase of \(p^2\) and the decrease of \(N_{\mathbf{C}_k}(p)\) ensures that even if \(r_{\mathbf{C}_k}(p)\) does not decrease, the product \(r_{\mathbf{C}_k}(p)\,\psi_{\mathbf{C}_k}(p)\) will tend to decrease as \(p\) increases. Summing over \(k\) yields
\begin{align*}
\sum_{k=1}^{K} r_{\mathbf{C}_k}(p_1)\,\psi_{\mathbf{C}_k}(p_1)\ge \sum_{k=1}^{K} r_{\mathbf{C}_k}(p_2)\,\psi_{\mathbf{C}_k}(p_2).
\end{align*}
Although \(N^{\mathrm{eff}}(p)\) is non-increasing in \(p\), we assume that the decrease in the denominator which is driven by the quadratic decay in \(\psi_{\mathbf{C}_k}(p)\) dominates the decrease in \(N^{\mathrm{eff}}(p)\). That is, for any \(1<p_1<p_2\),
\begin{align*}
\frac{N^{\mathrm{eff}}(p_1)p^2_1}{\displaystyle\sum_{k=1}^{K} r_{\mathbf{C}_k}(p_1)\,\psi_{\mathbf{C}_k}(p_1)} < \frac{N^{\mathrm{eff}}(p_2)p^2_2}{\displaystyle\sum_{k=1}^{K} r_{\mathbf{C}_k}(p_2)\,\psi_{\mathbf{C}_k}(p_2)}.
\end{align*}
Therefore,
\begin{align*}
CF^{\mathrm{eff}}_{\mathrm{overall}}(p_1, \alpha, K) < CF^{\mathrm{eff}}_{\mathrm{overall}}(p_2, \alpha, K).
\end{align*}
\end{proof}

 Algorithm~\ref{alg:cf_eff} calculates the effective overall compression factor, \(CF_{\text{overall}}^{\text{eff}}(p, \alpha, K)\), by considering overlapping patches and determining the number of singular values to retain for each cluster based on the energy threshold, \(\alpha\).
\begin{algorithm}[!htbp]
\caption{Calculation of $CF^{\text{eff}}_{\text{overall}}$}
\label{alg:cf_eff}
\begin{algorithmic}[1]
\State \textbf{Input:} $\{N_{\mathbf{C}_k}\}_{k=1}^K$, $p^2$, $\{\beta_{\mathbf{C}_k}\}_{k=1}^K$, $\{\sigma_i^2\}_{i=1}^{\min\{N_{\mathbf{C}_k},p^2\}}$, $\alpha$, $K$
\State \textbf{Output:} $CF^{\text{eff}}_{\text{overall}}$
\State $N_{\text{eff}} \gets 0$, $D \gets 0$ \Comment{$D$ represents the denominator in (Eq.~\ref{overall eff})}
\For{$k \gets 1$ to $K$} 
    \State $N^{\text{eff}}_{\mathbf{C}_k} \gets N_{\mathbf{C}_k}  (1 - \beta_{\mathbf{C}_k})$ \Comment{Effective patches for cluster \(\mathcal{C}_k\) (Eq.~\ref{unique patches})}
    \State $N^{\text{eff}} \gets N^{\text{eff}} + N^{\text{eff}}_{\mathbf{C}_k}$ \Comment{Summing effective patches (Eq.~\ref{total effective})}
    \State $E_{\mathbf{C}_k} \gets \sum_{i=1}^{\min\{N_{\mathbf{C}_k},p^2\}} \sigma_i^2$ \Comment{$E_{\mathbf{C}_k}$ is total energy for cluster \(\mathcal{C}_k\)}
    \State $r_{\mathbf{C}_k} \gets 0$, $P_{\mathbf{C}_k} \gets 0$ \Comment{$P_{\mathbf{C}_k}$ is partial energy for cluster \(\mathcal{C}_k\)}
    \While{$P_{\mathbf{C}_k} < \alpha E_{\mathbf{C}_k}$} \Comment{Energy threshold (Eq.~\ref{retained components})}
        \State $r_{\mathbf{C}_k} \gets r_{\mathbf{C}_k} + 1$
        \State $P_{\mathbf{C}_k} \gets \sum_{i=1}^{r_{\mathbf{C}_k}} \sigma_i^2$
    \EndWhile
    \State $C_{\mathbf{C}_k} \gets r_{\mathbf{C}_k}  (N_{\mathbf{C}_k} + p^2 + 1)$ \Comment{$C_{\mathbf{C}_k}$ is cluster contribution for $k$}
    \State $D \gets D + C_{\mathbf{C}_k}$ \Comment{Summing cluster contributions for denominator}
\EndFor
\State $CF^{\text{eff}}_{\text{overall}} \gets \displaystyle\frac{p^2 N^{\text{eff}}}{D}$ \Comment{Effective overall $CF$ (Eq.~\ref{overall eff})}
\State \Return $CF^{\text{eff}}_{\text{overall}}$
\end{algorithmic}
\end{algorithm}
\subsection{Data matrix  reconstruction}
The compressed medical image, \(\widehat{\mathbf{A}}\), is reconstructed by combining the compressed patches from each cluster back into their original positions. In the compressed representation, each row of \(\widehat{\mathbf{C}}_k\) corresponds to a vectorized patch extracted from the original matrix \(\mathbf{A}\) at a specific position, determined by the top-left corner coordinates \((i_k,j_k)\) of the patch. For the \(l^{\text{th}}\) patch in cluster \(\mathcal{C}_k\), let \(\Omega_{k,l} \subset [1,m] \times [1,n]\) be its spatial support, that is, the set of pixel coordinates covered by the patch. An indicator function is defined as
\begin{align}
\mathbb{I}\{(i,j) \in \Omega_{k,l}\} = \begin{cases} 
1, & \text{if } (i,j) \in \Omega_{k,l}\\
0, & \text{otherwise}.
\end{cases}
\end{align}
Since the medical image is split into overlapping patches, a pixel at location \((i,j)\) may be covered by several patches. Let \(w_{ij}\) denote the number of patches covering pixel \((i,j)\). The final reconstructed image \(\widehat{\mathbf{A}} \in \mathbb{R}^{m \times n}\) is computed as
\begin{align}
\widehat{\mathbf{A}}_{ij} = \frac{1}{w_{ij}} \sum_{k=1}^{K} \sum_{l=1}^{N_{\mathbf{C}_k}} \mathbb{I}\{(i,j) \in \Omega_{k,l}\}\, \Bigl[\operatorname{vec}^{-1}\Bigl([\,\widehat{\mathbf{C}}_k\,]_l\Bigr)\Bigr]_{ij},
\end{align}
where \([\,\widehat{\mathbf{C}}_k\,]_l\) denotes the \(l^{\text{th}}\) row of \(\widehat{\mathbf{C}}_k\), the compressed representation of the \(l^{\text{th}}\) patch in cluster \(\mathcal{C}_k\), and \(\Bigl[\operatorname{vec}^{-1}([\,\widehat{\mathbf{C}}_k\,]_l)\Bigr]_{ij}\) is the pixel value at location \((i,j)\) in the reshaped \(p\times p\) patch.
\subsection{Computation complexity}\label{sec:computational cost}
The cost of k-means clustering is affected by the number of iterations \(T\), the total number of patches \(N\), the dimensionality of the patches \(p^2\), and the number of clusters \(K\). The complexity for each iteration of k-means clustering is approximately \(\mathcal{O}(KNp^2)\) \cite{alsabti1997efficient, huang1998extensions}. For \(T\) iterations, the total cost for k-means clustering is
\begin{align}
\text{cost}_{\text{kmeans}}\sim\mathcal{O}(TKNp^2).    
\end{align}
After clustering, each cluster \(\mathcal{C}_k\) has a matrix \(\mathbf{C}_k \in \mathbb{R}^{N_{\mathbf{C}_k} \times p^2}\), where \(N_{\mathbf{C}_k}\) is the number of patches in cluster \(\mathcal{C}_k\). The computational complexity of SVD for each matrix is 
\begin{align}
\text{cost}_{\text{full SVD}}\sim\mathcal{O}(N_{\mathbf{C}_k} p^2  \min\{N_{\mathbf{C}_k}, p^2\}).
\end{align}
If \(r_{\mathbf{C}_k}\) singular values are retained for each cluster, the complexity of compressing with truncated SVD in each cluster becomes 
\begin{align}
\text{cost}_{\text{t-SVD}}\sim\mathcal{O}(N_{\mathbf{C}_k} p^2 r_{\mathbf{C}_k}),    
\end{align}
where \(r_{\mathbf{C}_k} \ll \min\{N_{\mathbf{C}_k}, p^2\}\). Therefore, the overall complexity for all \(K\) clusters for SVD becomes
\begin{align}
\text{cost}_{\text{total SVD}}\sim\mathcal{O}\left(p^2\sum_{k=1}^{K} N_{\mathbf{C}_k} r_{\mathbf{C}_k} \right).
\end{align}
Additional operations, such as splitting the matrix into patches and reconstructing the matrix after compression, involve smaller computational costs. These can be represented collectively as \(\mathcal{O}(\epsilon)\). It then follows that the total computational cost is given by 
\begin{align}{\label{total cost}}
{\text{total cost}} \sim \mathcal{O}\left(TKNp^2 + p^2\sum_{k=1}^{K} N_{\mathbf{C}_k} r_{\mathbf{C}_k} +\epsilon\right).
\end{align}
The computational cost in Eq.~\ref{total cost} is lower for non-overlapping patches than for overlapping patches because non-overlapping patches eliminate redundancy caused by overlapping regions. Overlapping patches significantly increase the total number of patches \(N\), as each pixel can belong to multiple patches. This increases the complexity of clustering, which scales as \(\mathcal{O}(TKNp^2)\), and of SVD, which scales as \(\mathcal{O}\left(p^2 \sum_{k=1}^{K} N_{\mathbf{C}_k} r_{\mathbf{C}_k}\right)\). In addition, overlapping patches require a weighted averaging process during reconstruction to resolve contributions from multiple overlapping patches, further adding to the computational overhead. In contrast, non-overlapping patches simplify the process, as each pixel belongs to exactly one patch, reducing \(N\) and avoiding the need for overlap-related operations. Consequently, the terms \(TKNp^2\) and \(\sum_{k=1}^{K} N_{\mathbf{C}_k} r_{\mathbf{C}_k}\) in Eq.~\ref{total cost} are smaller for non-overlapping patches, resulting in a lower overall computational cost. 

If parallel computing is applied to both k-means clustering and SVD, the overall computational complexity in Eq.~\ref{total cost} can be significantly reduced. For k-means clustering, assuming it is parallelized efficiently, the complexity can be reduced depending on the number of processors \( P \). In an ideal scenario, where computations are evenly distributed across processors, the complexity can be reduced to 
\begin{align}
 \text{cost}_{\text{kmeans parallel}}\sim   \mathcal{O}\left(\frac{TKNp^2}{P}\right),
\end{align} leading to a substantial speedup. For the SVD step, parallelization can be implemented by processing each cluster independently on separate processors. In this case, the overall complexity is determined by the largest computational burden among the clusters, reducing the complexity to approximately 
\begin{align}
 \text{cost}_{\text{t-SVD parallel}}\sim\mathcal{O}\left(p^2 \max_k \{N_{\mathbf{C}_k} r_{\mathbf{C}_k}\} \right).
\end{align}
Therefore, the total computational cost with parallel computing can be approximated as
\begin{align}
 \text{total cost}_{\text{parallel}}\sim\mathcal{O}\left(\frac{TKNp^2}{P} + p^2 \max_k \{N_{\mathbf{C}_k} r_{\mathbf{C}_k}\} + \epsilon \right).
\end{align}
\section{Evaluation Metrics}{\label{sec:evaluation}}
This work uses several metrics to assess the quality of the reconstructed medical image. The mean square error (MSE) quantifies the average squared difference between corresponding pixels of the original matrix \(\mathbf{A}\) and its reconstruction \(\widehat{\mathbf{A}}\) over a region of interest (ROI) \(\Omega \subset [1,m] \times [1,n]\). It is computed as
\begin{align}
\text{MSE}(\mathbf{A}_{\Omega}, \widehat{\mathbf{A}}_{\Omega}) = \frac{1}{\vert \Omega \vert} \sum_{(i,j) \in \Omega} (\mathbf{A}_{ij} - \widehat{\mathbf{A}}_{ij})^2.
\end{align}
In addition to MSE, the structural similarity index measure (SSIM) evaluates matrix quality by considering structural information, luminance, and contrast. SSIM is defined by
\begin{align}
\text{SSIM}(\mathbf{A}_\Omega, \widehat{\mathbf{A}}_\Omega) = \frac{(2\mu_{\mathbf{A}_\Omega}\mu_{\widehat{\mathbf{A}}_\Omega} + C_1)(2\sigma_{\mathbf{A}_\Omega\widehat{\mathbf{A}}_\Omega} + C_2)}{(\mu_{\mathbf{A}_\Omega}^2 + \mu_{\widehat{\mathbf{A}}_\Omega}^2 + C_1)(\sigma_{\mathbf{A}_\Omega}^2 + \sigma_{\widehat{\mathbf{A}}_\Omega}^2 + C_2)},
\end{align}
where the means, variances, and covariance capture local matrix statistics and \(C_1\), \(C_2\) are stabilizing constants. The peak signal-to-noise ratio (PSNR) further characterizes reconstruction quality in decibels by comparing the maximum pixel value with the MSE:
\begin{align}
\text{PSNR}(\mathbf{A}_\Omega, \widehat{\mathbf{A}}_\Omega) = 10 \log_{10} \left( \frac{\text{MAX}^2}{\text{MSE}(\mathbf{A}_\Omega, \widehat{\mathbf{A}}_\Omega)} \right).
\end{align}

For segmentation performance, especially in ROI, the intersection over union (IoU) metric is used and is defined as
\begin{align}
\text{IoU}(\Omega, \widehat{\Omega}) = \frac{\vert \Omega \cap \widehat{\Omega}\vert}{\vert \Omega \cup \widehat{\Omega}\vert}.
\end{align} While the relative error is used to compare the performance of cluster-based and global SVD and is defined as
\begin{align}
    \text{Relative error}_{\Omega}=\frac{\Vert\mathbf{A}_{\Omega}-\widehat{\mathbf{A}}_{\Omega}\Vert_F}{\Vert\mathbf{A}_{\Omega}\Vert_F}.
\end{align}

Edge detection and preservation, critical for identifying anatomical structures and abnormalities \cite{wilson11995medical, jiang2010medical, nikolic2016edge}, are evaluated using the Sobel filter \cite{vincent2009descriptive}. This method applies horizontal and vertical convolution kernels to obtain gradient maps \(\mathbf{S}_x\) and \(\mathbf{S}_y\). The edge preservation index (EPI) is then calculated as
\begin{align}
\mathrm{EPI} &= 1 - \frac{\Vert \mathbf{S}_{\mathbf{A}} - \widehat{\mathbf{S}}_{\widehat{\mathbf{A}}} \Vert_1}{\Vert \mathbf{S}_{\mathbf{A}} \Vert_1}\\ 
&= 1 - \frac{\sum_{(i,j) \in \Omega} \vert \mathbf{S}_{\mathbf{A}}(i,j) - \widehat{\mathbf{S}}_{\widehat{\mathbf{A}}}(i,j) \vert}{\sum_{(i,j) \in \Omega} \vert \mathbf{S}_{\mathbf{A}}(i,j) \vert},
\end{align}
where \(\Vert \cdot\Vert_1\) denotes the entry-wise \(\ell_1\) norm and \(\mathbf{S}_{\mathbf{A}}(i,j)\) denotes the gradient magnitude for pixel \((i,j)\) and is defined as
\begin{align}
\mathbf{S}_{\mathbf{A}}(i,j) = \sqrt{(\mathbf{S}_x(i,j))^2 + (\mathbf{S}_y(i,j))^2},
\end{align}
with a similar expression for the reconstructed medical image. An EPI value of 1 indicates perfect preservation of edge information, while values closer to 0 suggest significant degradation.
\section{Experimental Setting}{\label{sec:experiment}}
Medical images are inherently large and contain complex structural details that must be preserved for accurate diagnosis, making efficient compression techniques crucial. Additionally, the high redundancy in medical imaging data makes it well-suited for low-rank approximation methods, which can reduce storage while maintaining critical diagnostic information. Therefore, we conducted experiments on medical images with different modalities to evaluate the effectiveness of global and cluster-based SVD in image compression and feature extraction. All computations were performed using a computer equipped with an 11th Gen Intel Core i7-1165G7 processor (2.80 GHz, 2.70 GHz base clock) and 16 GB RAM (15.6 GB usable). The system runs on a 64-bit operating system (x64 architecture). The global and cluster-based SVD were implemented using Python with libraries such as NumPy, SciPy, and OpenCV for matrix operations and image processing. To optimize computational efficiency, Joblib \cite{joblib} was used for parallel computing, allowing k-means clustering and SVD calculations to be executed across multiple processors. The experimental results discussed in this section are specific to MRI. Outcomes for other imaging modalities such as ultrasound, CT scan, and X-ray are discussed in the appendix. Readers are referred to the appendix for detailed findings across these additional modalities.
\subsection{Dataset Description}
In this experiment we used the Brain Tumor Segmentation (BraTS) 2020 dataset, which is publicly available on \href{https://www.kaggle.com/datasets/awsaf49/brats20-dataset-training-validation}{Kaggle} \cite{menze2014multimodal, bakas2017advancing, bakas2018identifying}. The BraTS 2020 dataset is widely recognized in medical image analysis, particularly for brain tumor segmentation and classification. It consists of high-resolution, multi-modal MRI scans including T1-weighted, T1-contrast enhanced (T1ce), T2-weighted, and FLAIR sequences—acquired using standardized MRI protocols at multiple clinical centers. Each patient underwent a series of scans to capture a range of tissue contrasts, and the raw imaging data were preprocessed through steps such as skull stripping, co-registration, and intensity normalization to reduce inter-scanner variability and align the images to a common anatomical space. Expert radiologists then manually annotated the tumor regions, providing detailed segmentation masks that serve as the ground truth for evaluating image analysis algorithms. This dataset is ideal for evaluating compression and denoising techniques due to its high-resolution images and detailed structural features. We selected BraTS 2020 for its clinical relevance, as brain tumor detection is a critical task in medical imaging, and maintaining diagnostic accuracy during compression is essential. For this experiment, we used a single 2D slice from the middle of the MRI volume, ensuring consistency across different evaluations. The corresponding tumor segmentation mask was included in our analysis to assess the impact of compression separately on tumor and non-tumor region.
\subsection{Performance Evaluation}
We evaluate the performance of global and cluster-based SVD on a randomly selected MRI image by analyzing their ability to preserve image quality at different compression levels. We examine how each method balances compression efficiency with preserving and extracting critical image features, particularly in the tumor region. 

Fig.~\ref{fig:diff_tumor_region} shows two MRI scans, each with a tumor located in a distinct brain region, and compares two compression techniques. For each scan, the original image is shown alongside a binary mask that highlights the tumor, followed by images reconstructed using both global and cluster-based SVD at a compression factor of approximately 118 for Figs.~\ref{fig:diff_tumor_region}(a) to \ref{fig:diff_tumor_region}(h) and a compression factor of about 64 for Figs.~\ref{fig:diff_tumor_region}(i) to \ref{fig:diff_tumor_region}(p). Additionally, residual maps for the tumor and non-tumor regions are provided to illustrate the reconstruction errors. The global SVD-compressed image shows significant block artifacts (Figs.~\ref{fig:diff_tumor_region}(c) and ~\ref{fig:diff_tumor_region}(k)), particularly affecting the tumor region and the entire image structure. This suggests that global SVD has difficulty preserving fine-grained local variations, resulting in significant residual errors inside and outside the tumor region (Figs.~\ref{fig:diff_tumor_region}(e), \ref{fig:diff_tumor_region}(g), \ref{fig:diff_tumor_region}(m), and \ref{fig:diff_tumor_region}(o)). In contrast, cluster-based SVD preserves tumor features and background details (Figs.~\ref{fig:diff_tumor_region}(d) and ~\ref{fig:diff_tumor_region}(l)). This difference can be observed in the residual error maps. Global SVD leads to higher errors in the entire image (Figs.~\ref{fig:diff_tumor_region}(e), \ref{fig:diff_tumor_region}(g), \ref{fig:diff_tumor_region}(m), and \ref{fig:diff_tumor_region}(o)), while cluster-based SVD's residual errors are lower (Figs.~\ref{fig:diff_tumor_region}(f), \ref{fig:diff_tumor_region}(h), \ref{fig:diff_tumor_region}(n), and \ref{fig:diff_tumor_region}(p)), indicating better reconstruction quality when the original image is highly compressed. The results indicate that cluster-based SVD preserves localized structures, particularly within the tumor region, while global SVD exhibits more structured artifacts.
\begin{figure}
    \centering
    \includegraphics[width=0.9\linewidth]{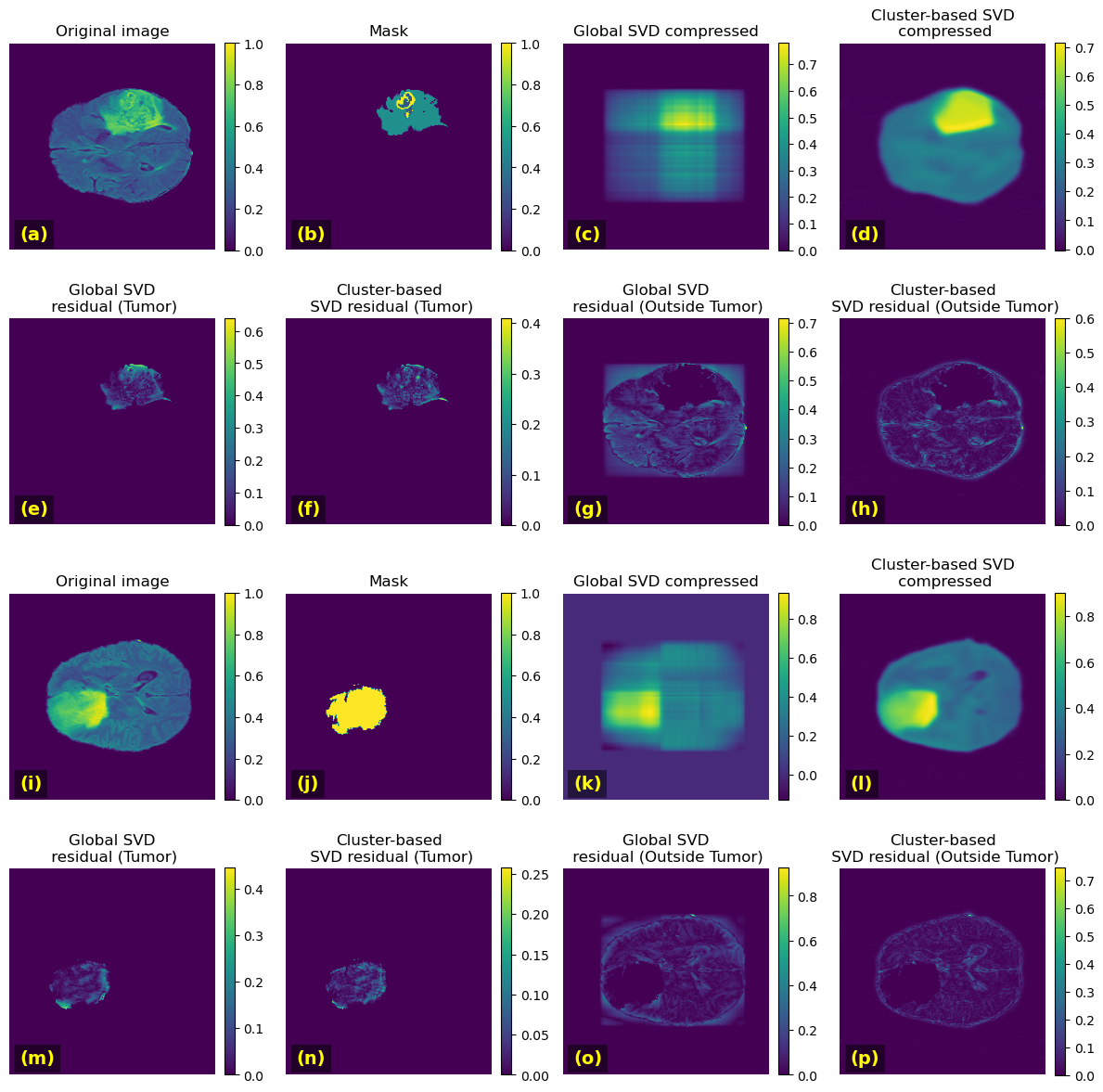}
    \caption{Comparison of global and cluster-based SVD compression for two MRI brain scans with tumors located in different regions. For the first case (a-h), the tumor is situated in a specific brain region. (a) shows the original MRI image, and (b) presents the corresponding binary mask highlighting the tumor area. (c) and (d) display the reconstructed images using global SVD and cluster-based SVD methods, respectively, at a compression factor of approximately 118. The residual maps within the tumor region are shown in (e) and (f) for global and cluster-based approaches, respectively, while (g) and (h) illustrate the residuals outside the tumor area. For the second case (i-p), the tumor appears in a different brain region. (i) presents the original image, and (j) shows the corresponding tumor mask. The compressed reconstructions using global SVD and cluster-based SVD at a compression factor of about 64 are shown in (k) and (l), respectively. (m) and (n) depict the residuals within the tumor area for the global and cluster-based methods, while (o) and (p) show the residuals outside the tumor area.}
    \label{fig:diff_tumor_region}
\end{figure}

The results presented in Figs.~\ref{fig:MRI_tumor} to \ref{fig:edge detection} were obtained using the MRI image shown in Fig.~\ref{fig:diff_tumor_region}(a). To ensure consistency in the evaluation, all performance metrics and comparisons in this section were based on this specific image. The results for MRI image in Fig.~\ref{fig:diff_tumor_region}(i) are included in the appendix.

Fig.~\ref{fig:MRI_tumor} shows a comparative analysis of global SVD and cluster-based SVD in preserving and extracting important image quality features within the tumor region across different compression factors. Figs.~\ref{fig:MRI_tumor}(a), \ref{fig:MRI_tumor}(b), \ref{fig:MRI_tumor}(c) and \ref{fig:MRI_tumor}(d) illustrate the variation of PSNR, SSIM, MSE, and IoU as compression increases, respectively. The x-axis in each plot represents the compression factor, while the y-axis corresponds to the respective performance metric. To ensure a fair comparison, a \(CF_{threshold}\) of 0.15 was used. All the results in Figs.~\ref{fig:MRI_tumor}(a), \ref{fig:MRI_tumor}(b), \ref{fig:MRI_tumor}(c) and \ref{fig:MRI_tumor}(d) show that global SVD cannot compress the tumor region of the image beyond a CF of approximately 120. This results from the fact that global SVD cannot find a valid rank \(r_g\) beyond this threshold to achieve higher compression, reflecting the limitations of its global approach. In contrast, cluster-based SVD can compress the tumor region to much higher CFs while maintaining high performance on all metrics. For PSNR, cluster-based SVD achieves higher scores across all CFs, indicating significantly better reconstruction quality, while global SVD experiences a sharp drop in quality at higher CFs. Similarly, SSIM shows that cluster-based SVD preserves the structural details in the tumor region far more effectively, while global SVD shows significant degradation, especially as CF approaches its upper limit. The MSE results are consistent with these trends, as cluster-based SVD consistently achieves lower error values, highlighting its robustness in maintaining accuracy in the tumor region even under aggressive compression. Finally, the IoU metric confirms that cluster-based SVD provides a more accurate reconstruction of the tumor region, ensuring a closer match to the ground truth. In contrast, the IoU performance of global SVD degrades significantly with increasing CF, reflecting its inability to preserve localized features.
\begin{figure*}
    \centering
    \includegraphics[width=1.0\textwidth]{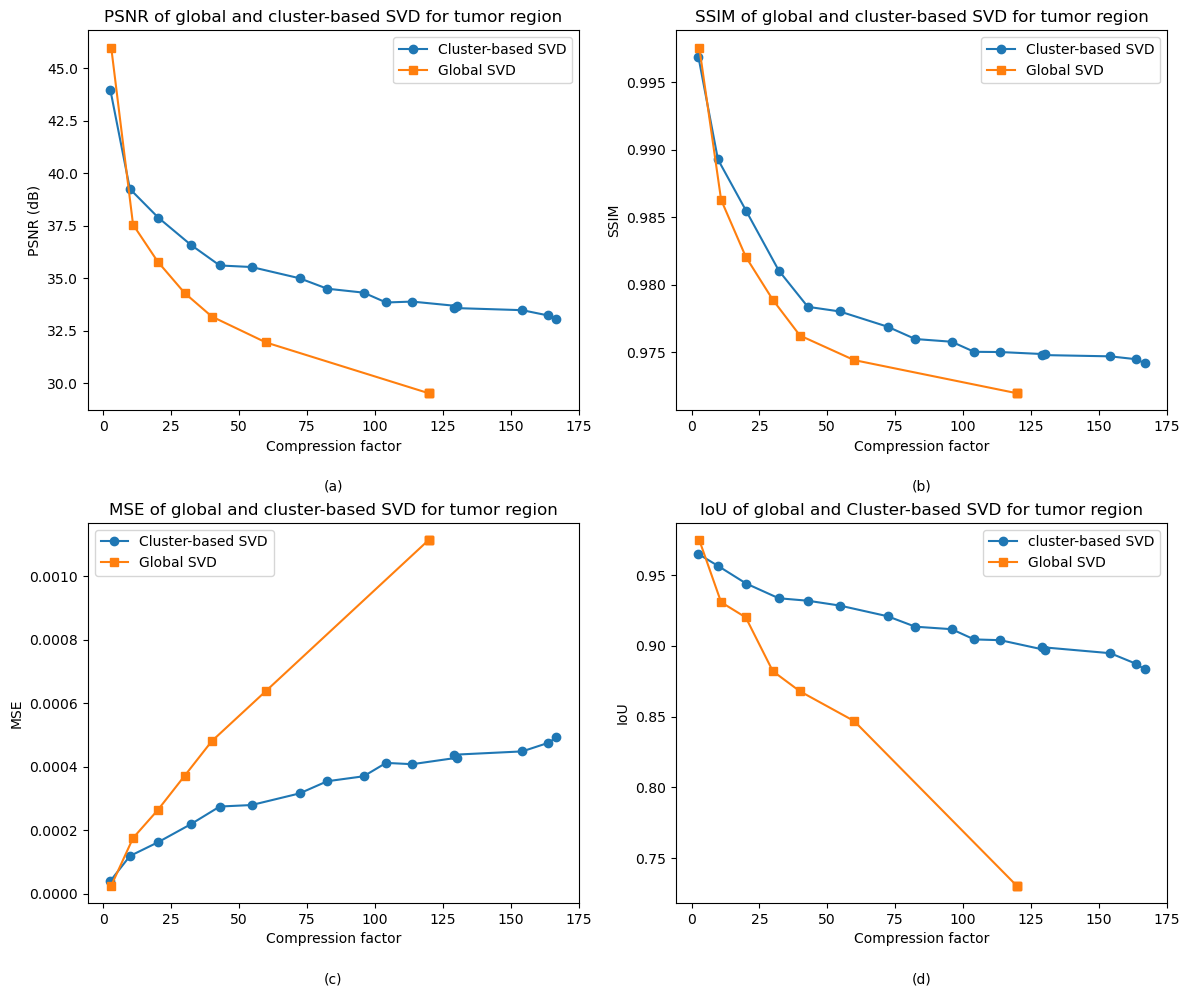}
    \caption{Performance comparison of global SVD and cluster-based SVD for the tumor region across different compression factors. (a) PSNR (dB) vs. Compression factor, (b) SSIM vs. Compression factor, (c) MSE vs. Compression factor, and (d) IoU vs. Compression factor.}
    \label{fig:MRI_tumor}
\end{figure*}

Fig.~\ref{fig:MRI_nontumor} shows similar performance comparisons as Fig.~\ref{fig:MRI_tumor} but focuses on the non-tumor region of the MRI image. This figure exhibits behavior similar to that seen in Fig.~\ref{fig:MRI_tumor}. However, it is noteworthy that the cluster-based SVD technique compresses the reconstructed image more in the non-tumor region compared to tumor region. This increased compression in the non-critical areas is advantageous because it allows for more efficient data reduction without compromising the overall diagnostic quality. By compressing the non-tumor regions more aggressively, the method effectively prioritizes the preservation of image fidelity in the tumor region (i.e., the primary area of interest) while optimizing storage and processing efficiency elsewhere.
\begin{figure*}
    \centering
    \includegraphics[width=1.0\textwidth]{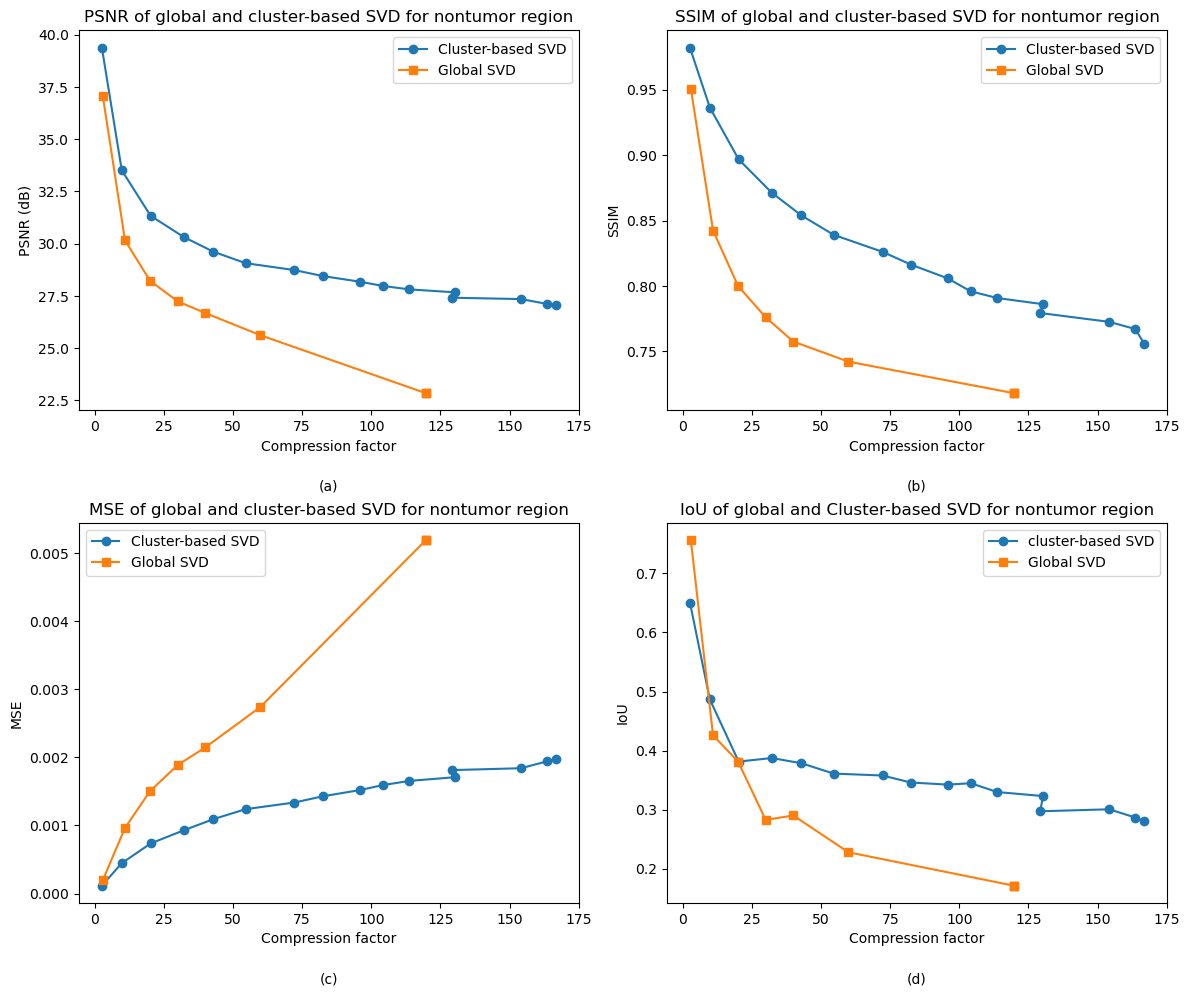}
    \caption{Performance comparison of global SVD and cluster-based SVD for the non-tumor region across different compression factors. (a) PSNR (dB) vs. Compression factor, (b) SSIM vs. Compression factor, (c) MSE vs. Compression factor, and (d) IoU vs. Compression factor.}
    \label{fig:MRI_nontumor}
\end{figure*}

Medical images contain critical structural details that must be preserved during compression. Fig.~\ref{fig:edge detection} shows a comparison in edge preservation for global and cluster-based SVD. Fig.~\ref{fig:edge detection}(a) displays the original medical image. Fig.~\ref{fig:edge detection}(b) shows the edge map generated directly from the original image, highlighting the inherent structural details. Fig.~\ref{fig:edge detection}(c) presents the edge map obtained from the image reconstructed using global SVD, while Fig.~\ref{fig:edge detection}(d) displays the edge map from the cluster‐based SVD reconstruction. Figs.~\ref{fig:edge detection}(e) and~\ref{fig:edge detection}(f) illustrate the corresponding edge loss maps for the global and cluster‐based SVD methods, respectively. The results demonstrates that cluster-based SVD outperforms global SVD in preserving edge details during image reconstruction. The edges extracted from the cluster-based SVD-reconstructed image exhibit greater similarity to those in the original image, while global SVD shows some degradation and distortion. This can bee seen in the edge loss maps, where global SVD (Fig.~\ref{fig:edge detection}e) exhibits more widespread loss, whereas cluster-based SVD (Fig.~\ref{fig:edge detection}f) has a more localized and less intense loss pattern. This suggests that cluster-based SVD maintains finer structural details more effectively. Furthermore, in terms of the EPI scores, the cluster-based SVD achieves 0.7466, compared to 0.6705 for global SVD.
\begin{figure}
    \centering
    \includegraphics[width=0.9\linewidth]{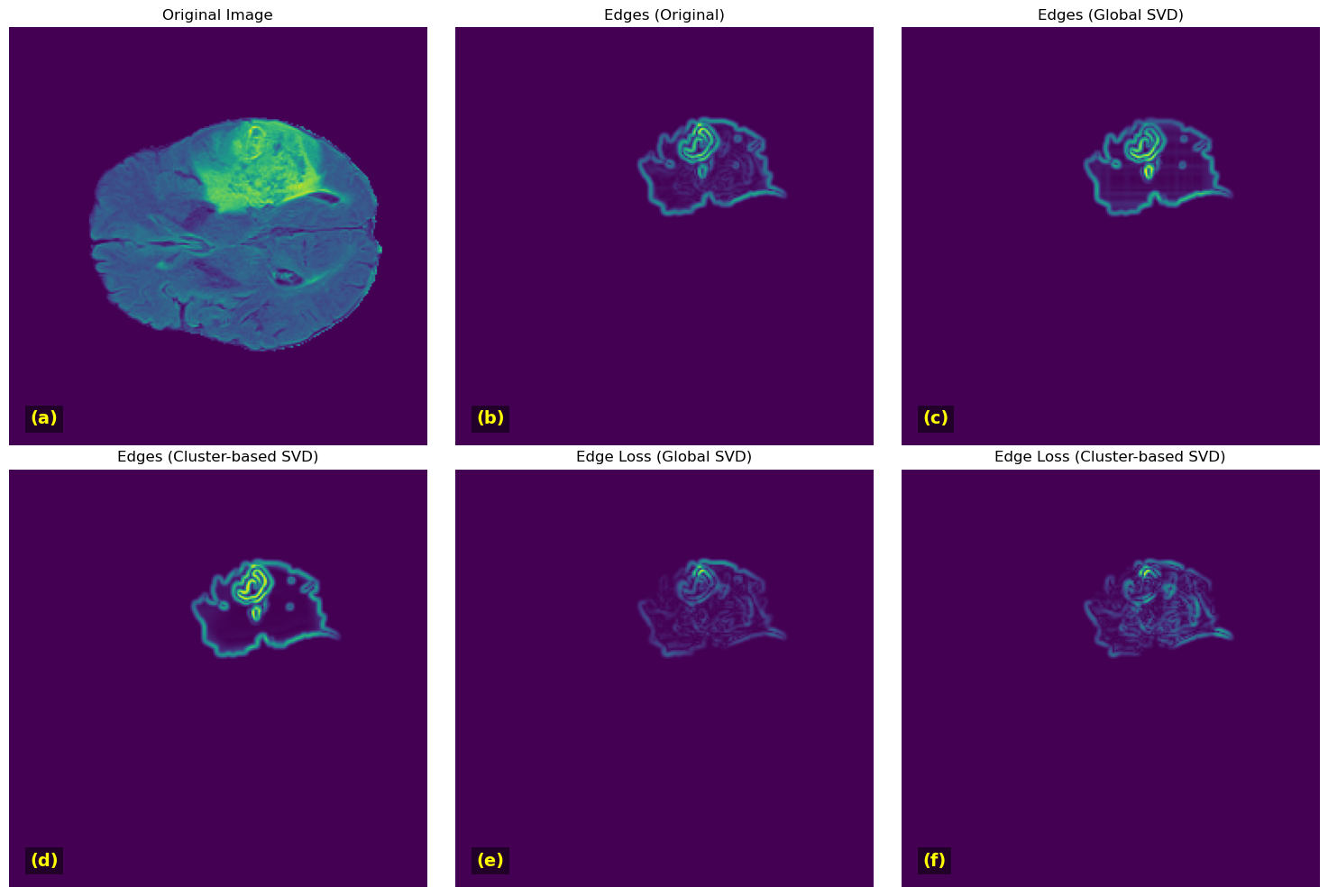}
    \caption{Comparison of edge preservation in global and cluster-based SVD. (a) Original medical image, (b) edges detected from the original image, (c) edges from global SVD-reconstructed image, (d) edges from cluster-based SVD-reconstructed image, (e) edge loss map for global SVD, and (f) edge loss map for cluster-based SVD.}
    \label{fig:edge detection}
\end{figure}

Global SVD has a lower computational cost due to its simpler uniform compression strategy. However, as discussed, this approach is limited in its ability to preserve fine details, especially in the tumor region, and struggles to achieve higher compression without significant image degradation. On the other hand, cluster-based SVD incurs higher computational costs as the compression factor increases. This increase is due to the additional step of clustering and the independent SVD computations performed within each cluster as discussed in Section~\ref{sec:computational cost}. Despite these additional costs, cluster-based SVD outperforms global SVD in preserving structural details, features and maintaining superior image quality at different compression factors.
\section{Discussion and Limitations}{\label{sec:discussion}}
The results of our study demonstrate that the cluster‐based SVD approach provides notable improvements over the global SVD method. By exploiting the local and non-local self-similarity in medical images, cluster-based SVD can group similar patches and perform localized SVD, which allows for adaptive compression. This adaptation enables the method to preserve important structural details such as edges and localized textures, which are often critical in medical imaging. For example, in MRI scans where tumor regions demand high fidelity, the localized processing allows for better preservation of fine details, even as compression factors increase. In contrast, global SVD applies a uniform compression across the entire image, often leading to block artifacts and significant loss of local features, particularly at higher compression factors. The advantage of the cluster-based method becomes even more evident when considering modalities like ultrasound, CT scan scans, and X-ray images. Each of these modalities exhibits inherent variations in texture and intensity; ultrasound images may suffer from low contrast and speckle noise, while CT scan and X-ray images require accurate representation of subtle gradations to capture anatomical details correctly. By adjusting the number of singular values retained in each cluster based on the local energy distribution, the cluster-based approach consistently yields higher-quality reconstructions with improved metrics such as PSNR, SSIM, and reduced MSE. The edge preservation capability of cluster-based SVD is particularly significant, as it maintains clear and sharp boundaries that are important for accurate segmentation and diagnosis.

To compare the performance of cluster-based and global SVD across different modalities, we computed the relative error within the ROI and outside it, along with the SSIM error and IoU error. In additionally, the EPI error was computed specifically within the tumor region to evaluate the preservation of structural details in the area of clinical relevance. Before computing these metrics, all reconstructed and ground truth images were resized to the same dimension. Furthermore, the compression factor was kept approximately the same for all modalities to allow for fair assessment of reconstruction quality. As shown in Table~\ref{tab:relative error}, cluster-based SVD consistently outperformed global SVD across all medical image modalities. The X-ray image modality preserved and extracted important feature the most, especially at a higher compression factor, due to their homogeneous backgrounds and sharp edges. The consistently low errors in X-ray image modality suggest that cluster-based SVD is more suited for this modality. CT scan image modality benefited notably, with better preservation of subtle gradients important for diagnostic accuracy. In MRI, the presence of complex anatomical textures and high-frequency tumor regions complicates the clustering process, leading to moderate gain. For ultrasound images, both cluster-based and global SVD methods performed relatively poorly, with high relative, SSIM, and IoU errors. This is due to ultrasound's low contrast and speckle noise which reduces patch similarity.
\begin{table*}[htbp]
\centering
\captionsetup{font=small, labelfont=bf}
\caption{Comparison between cluster‐based and global SVD methods across different medical imaging modalities (MRI, ultrasound, CT scan, and X‑ray) and two randomly selected compression factors (6 and 40). For each modality and compression factor, the table reports the relative reconstruction error (Rel. Err), SSIM error, IoU error, and EPI error (all in \%) within the ROI and the existing metrics outside the ROI.}
\label{tab:relative error}
\renewcommand{\arraystretch}{1.5}
\begin{adjustbox}{max width=1.0\textwidth}
\begin{tabular}{@{}l l *{7}{c} *{7}{c}@{}}
\toprule
\textbf{CF} & \textbf{Modality} &
\multicolumn{7}{c}{\underline{\textbf{Cluster‑based SVD}}} &
\multicolumn{7}{c}{\underline{\textbf{Global SVD}}} \\
& & 
\multicolumn{4}{c}{\underline{\textbf{ROI}}} & \multicolumn{3}{c}{\underline{\textbf{Outside ROI}}} &
\multicolumn{4}{c}{\underline{\textbf{ROI}}} & \multicolumn{3}{c}{\underline{\textbf{Outside ROI}}} \\
& & 
\makecell[c]{\textbf{Rel.}\\\textbf{Err (\%)}} & 
\makecell[c]{\textbf{SSIM}\\\textbf{Err (\%)}} & 
\makecell[c]{\textbf{IoU}\\\textbf{Err (\%)}} &
\makecell[c]{\textbf{EPI}\\\textbf{Err (\%)}} &
\makecell[c]{\textbf{Rel.}\\\textbf{Err (\%)}} & 
\makecell[c]{\textbf{SSIM}\\\textbf{Err (\%)}} & 
\makecell[c]{\textbf{IoU}\\\textbf{Err (\%)}} &
\makecell[c]{\textbf{Rel.}\\\textbf{Err (\%)}} & 
\makecell[c]{\textbf{SSIM}\\\textbf{Err (\%)}} & 
\makecell[c]{\textbf{IoU}\\\textbf{Err (\%)}} &
\makecell[c]{\textbf{EPI}\\\textbf{Err (\%)}} &
\makecell[c]{\textbf{Rel.}\\\textbf{Err (\%)}} & 
\makecell[c]{\textbf{SSIM}\\\textbf{Err (\%)}} & 
\makecell[c]{\textbf{IoU}\\\textbf{Err (\%)}} \\
\midrule
\multirow{4}{*}{6}
& MRI        
  &  6.14 & 0.70 & 4.06 & 12.40  & 10.91 &  3.88 & 45.80  
  &  6.89 &  0.86 & 4.58 & 14.31  & 16.47 &  10.76 & 47.65  \\

& Ultrasound 
  & 12.14 & 2.20 & 30.43 & 15.80  &  7.61 & 5.04 & 16.77 
  & 17.50 & 5.90 & 45.90 & 30.75  & 11.63 & 12.64 & 22.94  \\

& CT scan         
  &  7.11 & 0.11 & 9.38 & 6.37   &  5.32 & 4.55 & 1.36  
  &  9.87 & 0.21 & 11.97 & 10.17  &  9.35 & 12.65 & 3.14  \\

& X‑ray      
  &  2.86 & 0.28 & 6.38 & 5.91   &  2.31 & 3.55 & 1.93  
  &  5.46 & 0.90 & 11.67 & 12.00  &  3.46 & 9.56 & 3.41 \\

\midrule
\multirow{4}{*}{40}
& MRI        
  & 10.20 & 1.89 & 6.59 & 25.95   & 20.04 & 12.07 & 61.25  
  & 15.14 & 2.38 & 13.21 & 32.37  & 30.50 & 23.18 & 70.98  \\

& Ultrasound 
  & 21.06 & 7.40 & 60.22 & 37.76  & 16.05 & 21.93 & 30.76  
  & 32.94 & 12.97 & 80.97 & 51.72 & 24.79 & 35.64 & 41.39 \\

& CT scan         
  & 13.30 & 0.40 & 17.05 & 15.00  & 10.65 & 13.02 & 3.63  
  & 23.65 & 0.72 & 32.82 & 23.82  & 22.27 & 26.98 & 13.12 \\

& X‑ray      
  &  5.20 & 0.87 & 10.52 & 12.37  &  4.59 & 10.67 & 3.81  
  & 15.35 & 2.79 & 35.66 & 27.56 & 12.29 & 29.37 & 11.90  \\
\bottomrule
\end{tabular}
\end{adjustbox}
\end{table*}

Despite these advantages, the cluster-based approach is not without limitations. The introduction of a clustering step increases the overall computational complexity, which may present challenges in real-time or resource-constrained environments. Additionally, the performance of the method is highly sensitive to key parameters such as patch size, the number of clusters, and the energy threshold used to determine \(r_{\mathbf{C}_k}\). Inappropriate parameter settings can either lead to over-compression, which results in the loss of important diagnostic details, or under-compression, which fails to achieve the desired reduction in data storage size.
\section{Conclusion and future work}{\label{sec:conclusion}}
In this work, we introduced a cluster-based SVD framework to address the challenges of compressing large-scale medical images while preserving and extracting critical structural details. Our study comprehensively compares global and cluster-based SVD. These methods were applied to medical images from different modalities, such as MRI, CT scan, ultrasound, and X-ray, to evaluate their effectiveness in preserving and extracting important features. Our analysis showed that the uniform compression enforced by global SVD fails to adapt to the local structural complexity of medical images. This often leads to visible artifacts and the loss of diagnostically important features, especially in heterogeneous regions such as tumors or lesions. In contrast, the cluster-based SVD method exploits local and non-local self-similarity to apply adaptive compression. This approach consistently improves image fidelity across modalities, enabling better preservation of fine structural details, as shown by higher PSNR, SSIM, IoU, and EPI values. However, our experimental results also highlight that the performance of cluster-based SVD is modality-dependent. Cluster-based SVD performed best in X-ray image modality, where the homogeneous background and sharp edges allowed for effective grouping and compression. On the other hand, it performed poorly on ultrasound image modality, where the presence of speckle noise and low contrast significantly reduced patch similarity, reducing the ability of cluster-based SVD to exploit localized low-rank structures effectively. Additionally, we provided a theoretical framework to quantify the compression factor. We showed how the overall compression factor is influenced by parameters such as patch size, \(K\) and \(\alpha\). Moreover, we demonstrated that there is not always a general global rank \(r_g\) for which the cluster-based and global SVD compression factors are approximately the same. While cluster-based SVD has higher computational complexity than global SVD, the study demonstrated that these costs could be mitigated through parallel processing, making the approach feasible for large-scale medical images.

Future work will investigate alternative clustering methods that are less computationally expensive than k-means. While k-means is effective in grouping structurally similar patches, its iterative nature and sensitivity to initialization can lead to increased processing time, especially in high-resolution medical images. Exploring faster clustering techniques may help reduce the overall computational burden while maintaining or improving clustering quality. Exploring clustering techniques, such as mini-batch k-means or hierarchical clustering with early stopping, may offer speed-ups without significantly compromising performance. Such improvements could make the proposed method more scalable, especially for high-resolution 3D medical images and real-time clinical applications.

\printcredits
\section*{Declaration of competing interest}
The authors declare that they have no known competing financial interests or personal relationships that could have appeared to influence the work reported in this paper.
\section*{Acknowledgments}
This research was supported in part by the Department of Higher Education and Training (DHET) through the University Staff Doctoral Programme (USDP) and in part by the National Research Foundation of South Africa (Ref No. CSRP23040990793).
\bibliographystyle{cas-model2-names}

\bibliography{cas-refs}

\appendix
\section{Appendix}
In this appendix, we discuss the application of cluster-based and global SVD techniques to different medical imaging modalities, such as MRI, CT scans, X-rays, and ultrasound, to examine their effectiveness in  compression.
\subsection{Experiment 1: MRI}
This section presents a detailed performance comparison between global SVD and cluster‐based SVD for an MRI image in Fig.~5(i) of the main document. Fig.~\ref{fig:edge} compares edge preservation in global and cluster-based SVD reconstructions. Fig.~\ref{fig:edge}(a) presents the original medical image, while Fig.~\ref{fig:edge}(b) displays the edge map extracted from it, highlighting the natural structural details. Fig.~\ref{fig:edge}(c) shows the edge map obtained from the image reconstructed using global SVD, showing that many fine edges have been lost or blurred. In contrast, Fig.~\ref{fig:edge}(d) presents the edge map from the cluster-based SVD reconstruction, where edges are sharper and more consistent with the original image. The edge loss maps in Figs.~\ref{fig:edge}(e) and (f) further emphasize this difference. Fig.~\ref{fig:edge}(e) shows widespread edge loss throughout the image, whereas Fig.~\ref{fig:edge}(f) demonstrates more localized and less severe loss, indicating better preservation of structural details.
\begin{figure}
    \centering
    \includegraphics[width=1.0\linewidth]{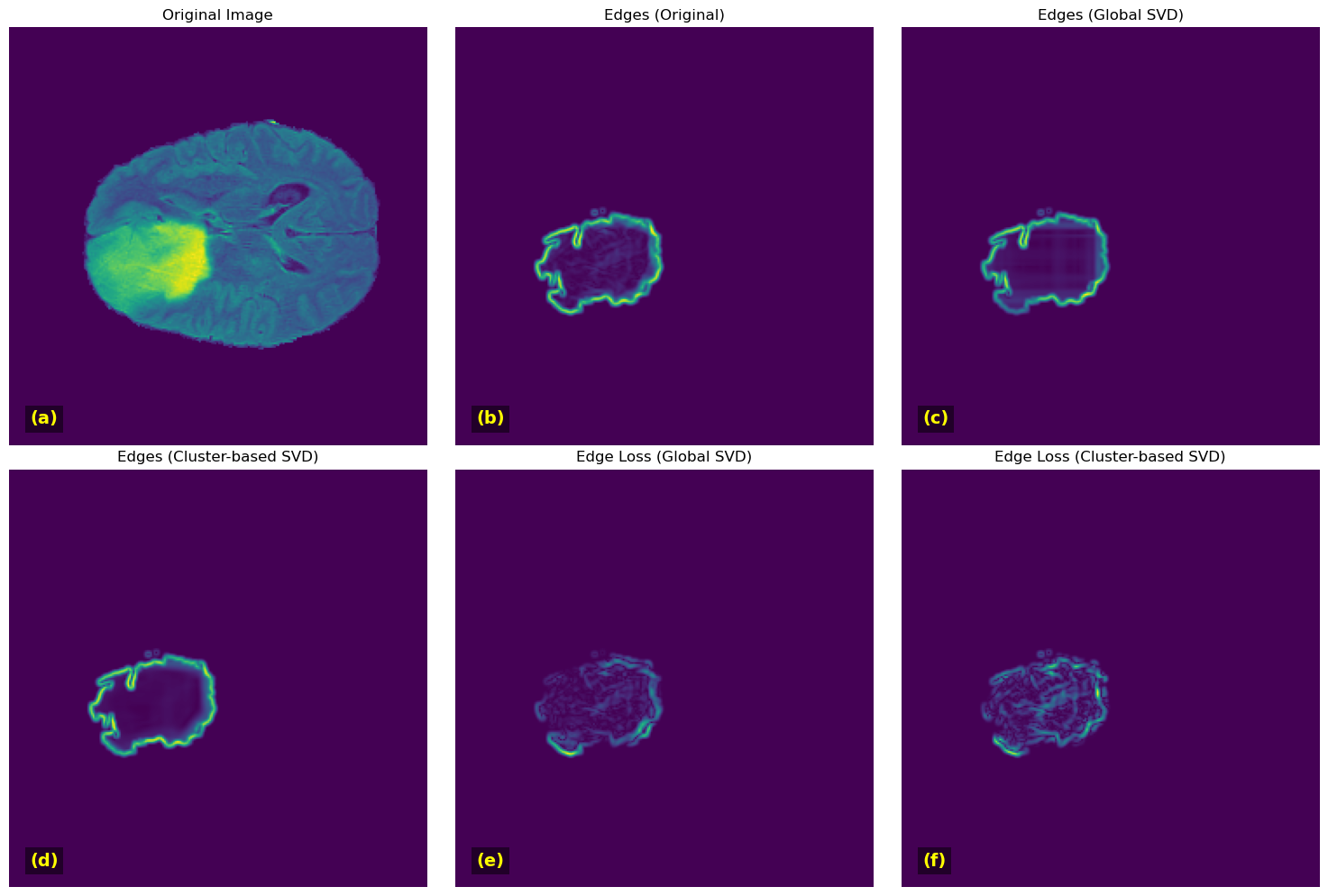}
    \caption{Comparison of edge preservation in global and cluster-based SVD reconstructed images.}
    \label{fig:edge}
\end{figure}
While Fig.~\ref{fig:tumor} shows that in the tumor region, cluster‐based SVD consistently outperforms global SVD as the compression factor increases; specifically, it achieves higher PSNR and SSIM values, lower MSE, and a higher IoU, indicating that cluster‐based SVD is much more effective at preserving fine details and extracting extracting important features during aggressive compression.
\begin{figure*}
    \centering
    \includegraphics[width=1.0\linewidth]{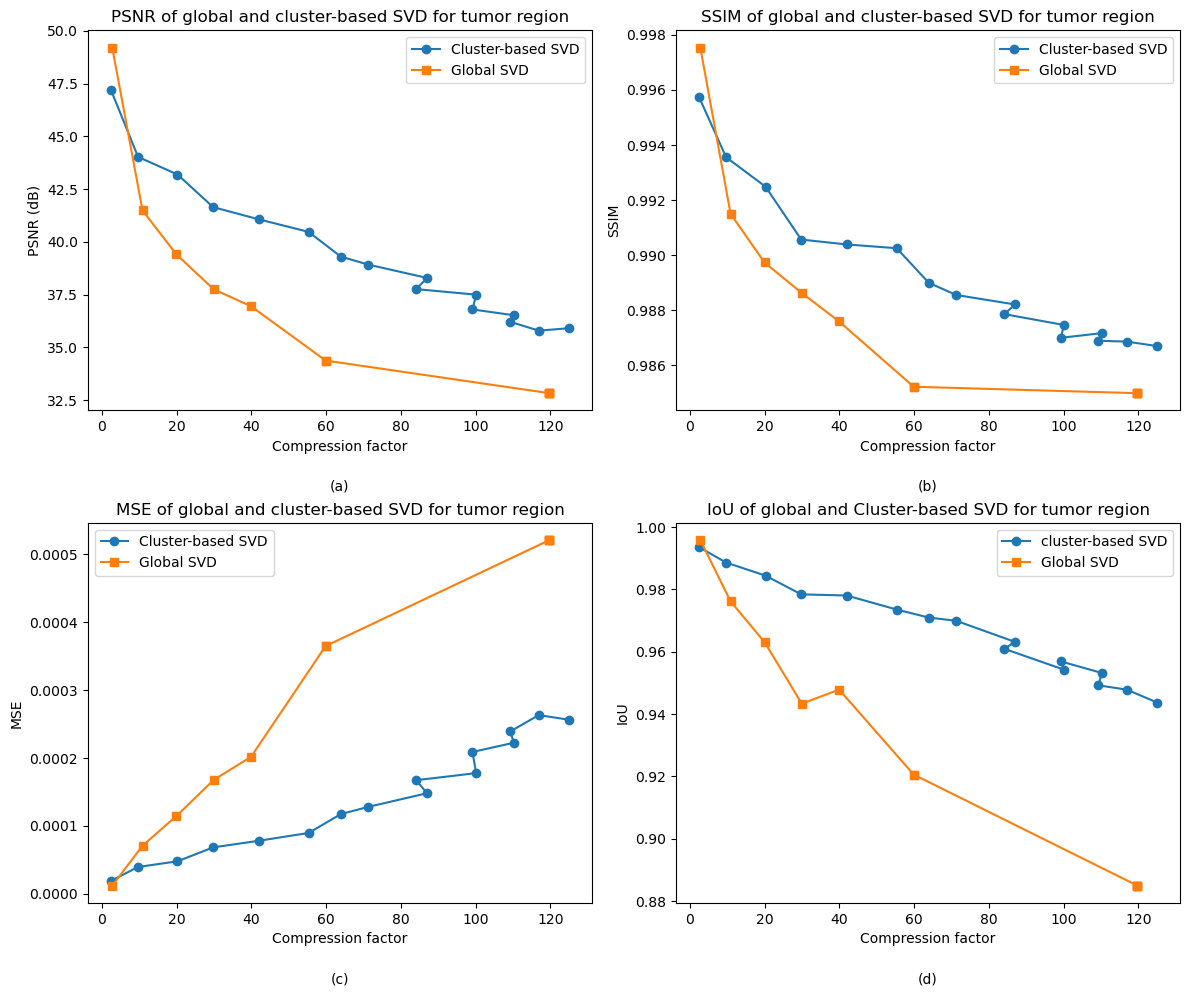}
    \caption{Performance comparison of global SVD and cluster-based SVD for the tumor region across different compression factors. (a) PSNR (dB) vs. Compression factor, (b) SSIM vs. Compression factor, (c) MSE vs. Compression factor, and (d) IoU vs. Compression factor.}
    \label{fig:tumor}
\end{figure*}
 Fig.~\ref{fig:nontumor} shows that in the non-tumor areas, a similar trend is observed. Cluster‐based SVD maintains superior image quality by preserving important features better, allowing for more aggressive compression in non-critical regions without compromising overall diagnostic quality. Cluster-based SVD achieves significantly better reconstruction quality despite higher computational costs. 
\begin{figure*}
    \centering
    \includegraphics[width=1.0\linewidth]{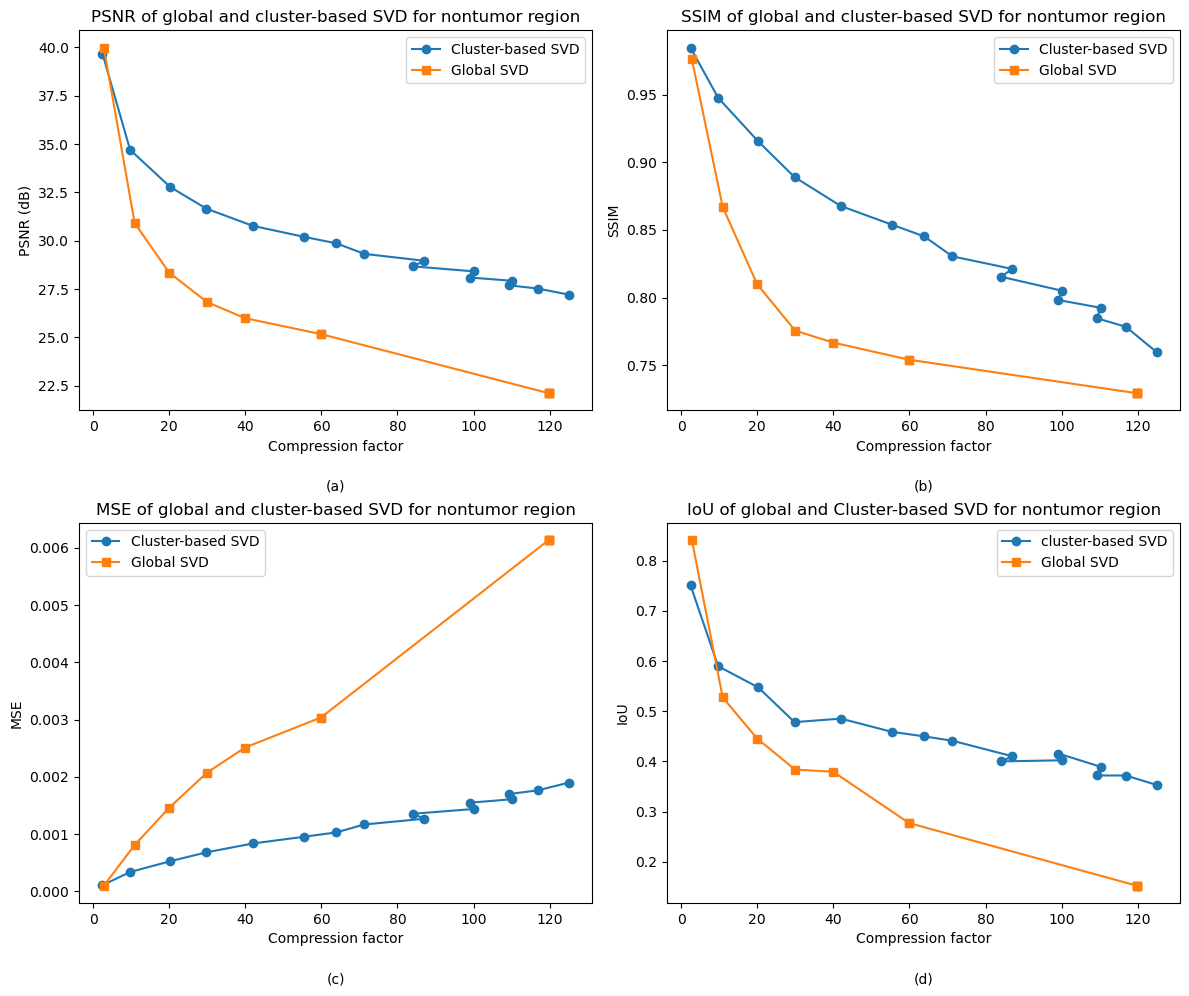}
    \caption{Performance comparison of global SVD and cluster-based SVD for the non-tumor region across different compression factors. (a) PSNR (dB) vs. Compression factor, (b) SSIM vs. Compression factor, (c) MSE vs. Compression factor, and (d) IoU vs. Compression factor.}
    \label{fig:nontumor}
\end{figure*}
\subsection{Experiment 2: Ultrasound}
\subsubsection{Dataset Description}
The breast ultrasound images dataset is a publicly available dataset on \href{https://www.kaggle.com/datasets/aryashah2k/breast-ultrasound-images-dataset}{Kaggle} that contains 780 ultrasound images collected for breast cancer detection \cite{al2020dataset}. The images are categorized into three classes: normal, benign, and malignant, with expert annotations indicating the presence and type of lesions. Each image in the dataset is stored in PNG format and varies in size. Alongside the images, the dataset includes mask annotations that highlight the ROI, allowing for precise segmentation and classification tasks. The images were acquired using standard ultrasound imaging protocols. This dataset is particularly relevant for evaluating low-rank approximation methods due to the presence of speckle noise, a common challenge in ultrasound imaging. The structural properties of these images make them suitable for assessing denoising and compression techniques while ensuring the preservation of diagnostically significant details. In this experiment, we randomly selected one image to analyze the effects of our method on ultrasound modality.
\subsubsection{Performance Evaluation}
The results presented in Figs.~\ref{fig:cancer_image} to \ref{fig:non_cancer_region} demonstrate the advantage of cluster-based SVD over global SVD for medical image compression, particularly in preserving and extracting critical features. Figure 10 compares the two methods at a compression factor of approximately 65, showing that while global SVD introduces block artifacts and loss of fine details in both cancerous and non-cancerous regions (Fig.~\ref{fig:cancer_image}(c)), cluster-based SVD maintains superior image quality with fewer artifacts (Fig.~\ref{fig:cancer_image}(d)). This is further evidenced by the residual error maps (Fig.~\ref{fig:cancer_image}(e) to (h)), where cluster-based SVD exhibits significantly lower errors in both cancer and non-cancer areas (Fig.~\ref{fig:cancer_image}(f) and (h)). 
\begin{figure}
    \centering
    \includegraphics[width=1.0\linewidth]{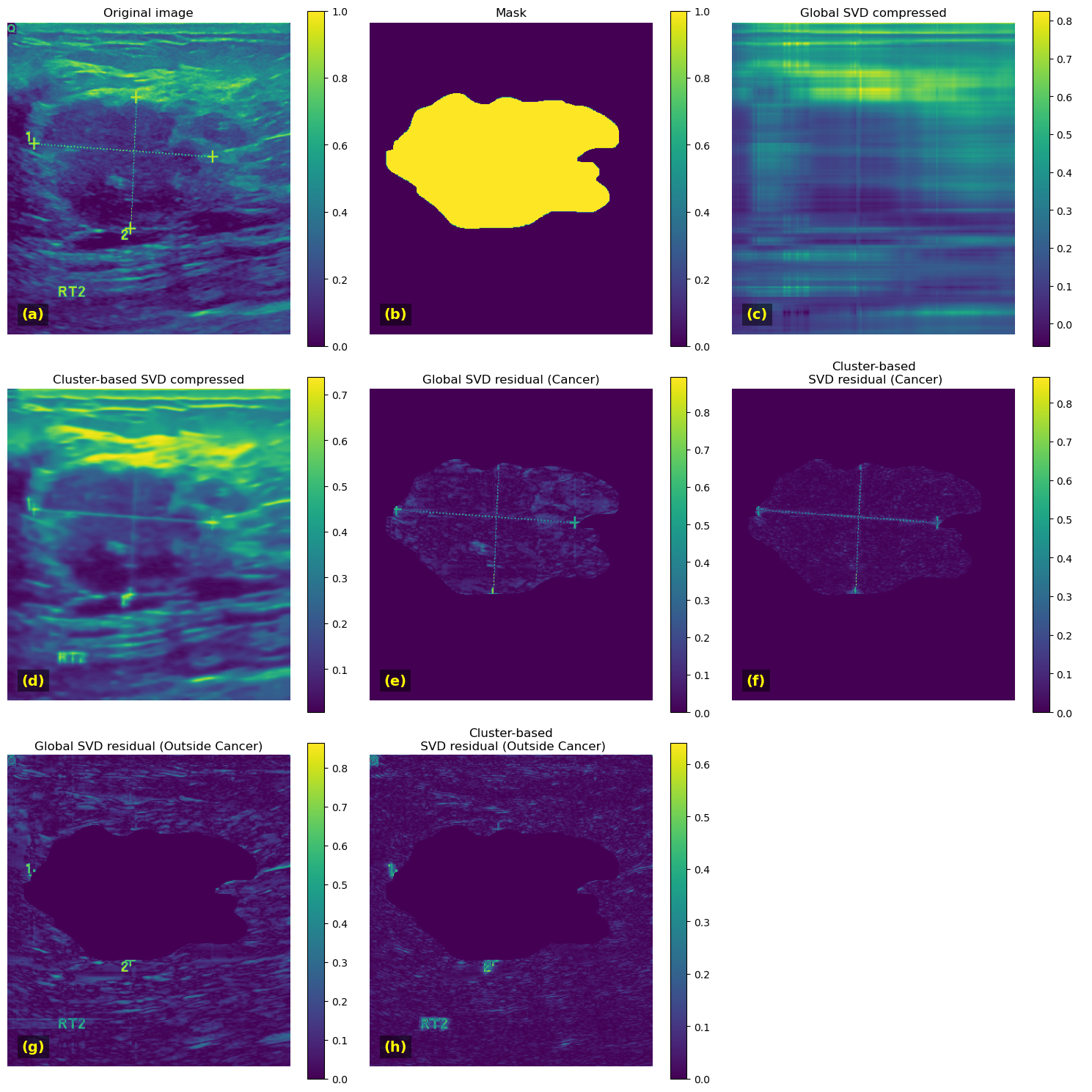}
    \caption{This figure compares the effects of global and cluster-based SVD compression on a medical image at a compression factor of approximately 65. (a) Shows the original image, while (b) presents the binary mask highlighting the cancerous region. (c) and (d) show the compressed images obtained using global SVD and cluster-based SVD, respectively. (e) and (f) illustrate the residual errors for global and cluster-based SVD within the cancerous region, whereas (g) and (h) depict the residual errors outside the cancerous region.}
    \label{fig:cancer_image}
\end{figure}
The edge maps in Fig.~\ref{fig:edge_cancer} reveal that global SVD tends to blur or lose fine edges (Figs.~\ref{fig:edge_cancer}(c) and (e)), whereas cluster-based SVD retains sharper, more accurate edges closer to the original image (Figs.~\ref{fig:edge_cancer}(d) and (f)). The EPI score further quantifies this advantage, with global SVD achieving 0.3915 and cluster-based SVD achieving 0.5300, indicating better structural integrity.
\begin{figure}
    \centering
    \includegraphics[width=1.0\linewidth]{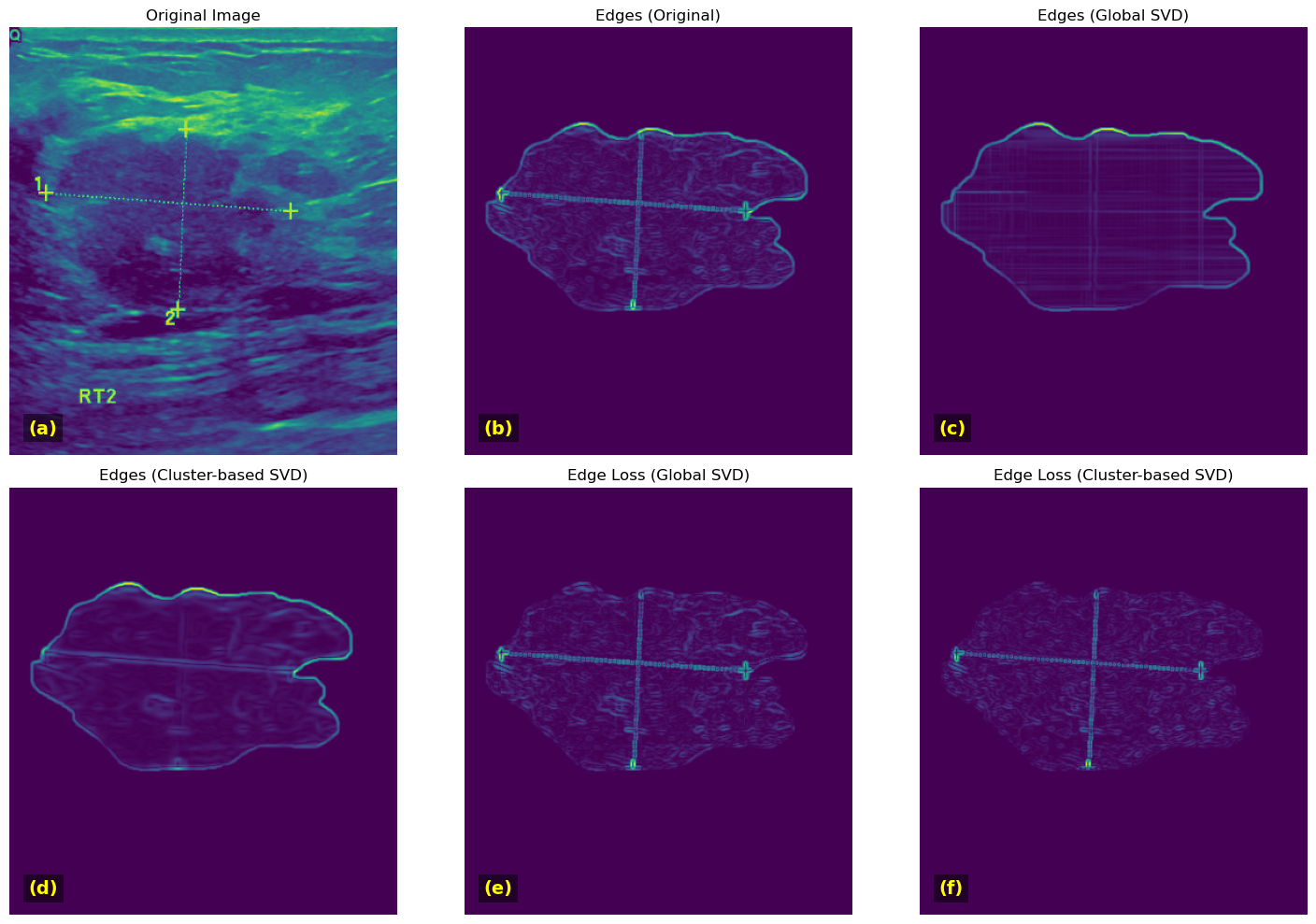}
    \caption{Comparison of edge preservation in global and cluster-based SVD. (a) shows the original image. (b) presents the detected edges in the original image. (c) and (d) illustrate the edges extracted from global and cluster-based SVD-reconstructed images, respectively, whereas (e) and (f) depict the corresponding edge loss for each method.}
    \label{fig:edge_cancer}
\end{figure}
Figs.~\ref{fig:cancer_region} and \ref{fig:non_cancer_region} evaluate the performance of global and cluster-based SVD across varying compression factors for cancerous and non-cancerous regions using different performance metrics, respectively. Cluster-based SVD consistently outperforms global SVD in PSNR, SSIM, MSE, and IoU. Notably, cluster-based SVD allows for more aggressive compression in non-cancerous regions without degrading quality in critical areas, optimizing storage efficiency while preserving diagnostic relevance.
\begin{figure*}[!t]
    \centering
    \includegraphics[width=1.0\linewidth]{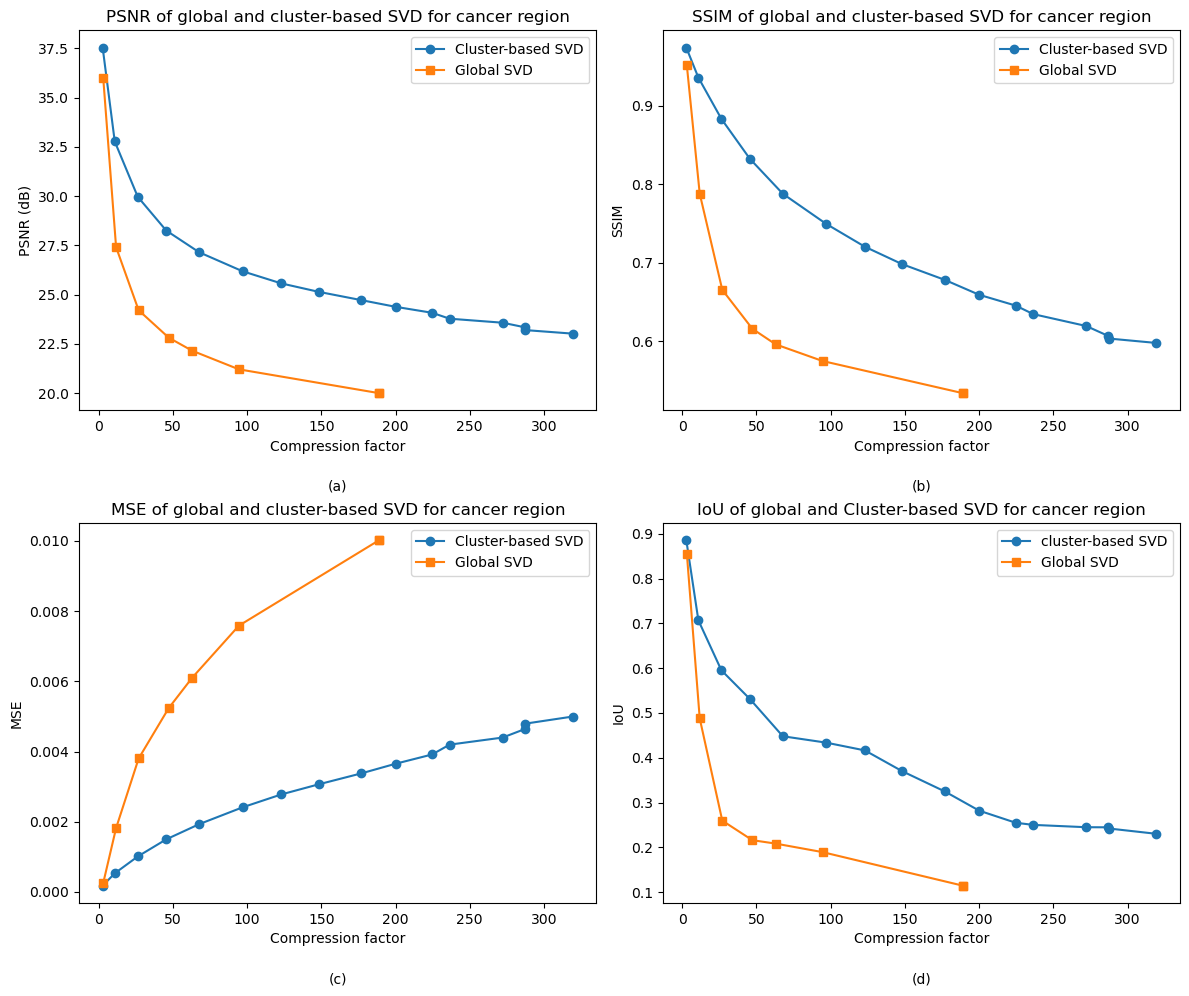}
    \caption{Performance comparison of global and cluster-based SVD for cancer region preservation across different compression factors. (a) shows PSNR, (b) shows SSIM, (c) shows MSE, and (d) show IoU.}
    \label{fig:cancer_region}
\end{figure*}
\begin{figure*}
    \centering
    \includegraphics[width=1.0\linewidth]{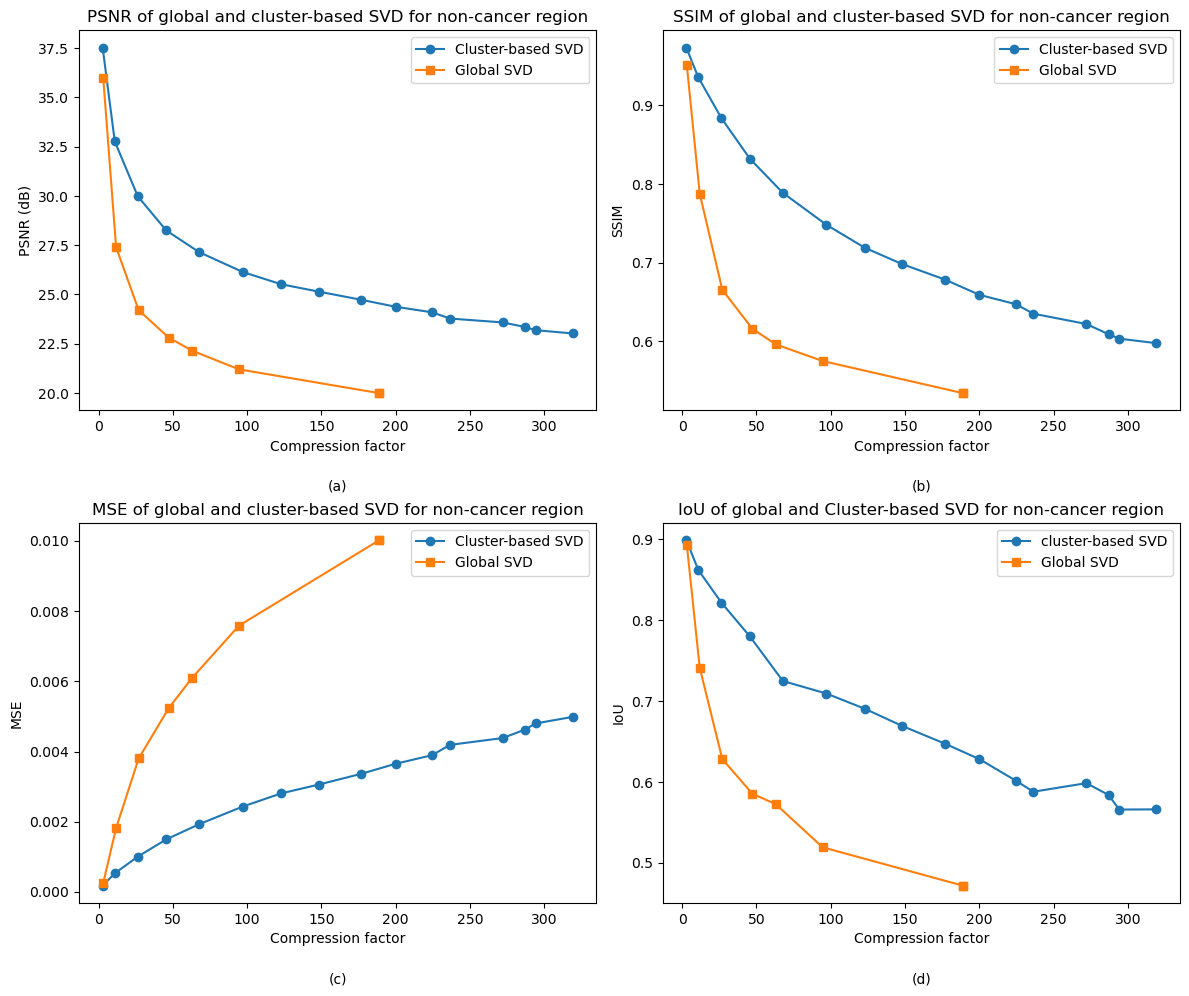}
    \caption{Performance comparison of global and cluster-based SVD for a non-cancer region preservation across different compression factors. (a) shows PSNR, (b) shows SSIM, (c) shows MSE, and (d) show IoU.}
    \label{fig:non_cancer_region}
\end{figure*}
\subsection{Experiment 3: CT scan}
\subsubsection{Dataset Description}  
The COVID-19 CT scan lesion segmentation dataset is a publicly available dataset on \href{https://www.kaggle.com/datasets/maedemaftouni/covid19-ct-scan-lesion-segmentation-dataset}{Kaggle} that contains 2\,729 lung CT scan images specifically curated for COVID-19 lesion segmentation \cite{maftouni2021covidctlesion}. The dataset includes a collection of CT slices with the corresponding lesion masks, which provide pixel-wise annotations of infected regions.  These annotations facilitate tasks such as automated lesion detection, segmentation, and severity assessment of COVID-19 infections. The scans were acquired from real patients diagnosed with COVID-19. Given the challenges associated with CT imaging, such as inherent noise and contrast variability, this dataset offers an excellent opportunity to assess image compression techniques. In this experiment, we randomly selected one image to analyze the effects of our method on CT modality.
\subsubsection{Performance Evaluation}
Fig.~\ref{fig:image_ct} compares the two methods on a CT scan at a compression factor of approximately 51. Fig.~\ref{fig:image_ct}(a) shows the original CT image, while Fig.~\ref{fig:image_ct}(b) shows a binary mask that highlights the infected (COVID‐19) areas. The globally compressed image in Fig.~\ref{fig:image_ct}(c) exhibits block artifacts and noticeable degradation, and in Fig.~\ref{fig:image_ct}(d) the cluster‐based SVD image has fewer artifacts. The residual error maps in Figs.~\ref{fig:image_ct}(e) and (f) (within the infection region) and in Figs.~\ref{fig:image_ct}(g) and (h) (outside the infection region) show that global SVD produces widespread errors, whereas cluster‐based SVD limits these errors to more localized areas, demonstrating its superior ability to preserve and extract critical image features.
\begin{figure}
    \centering
    \includegraphics[width=1.0\linewidth]{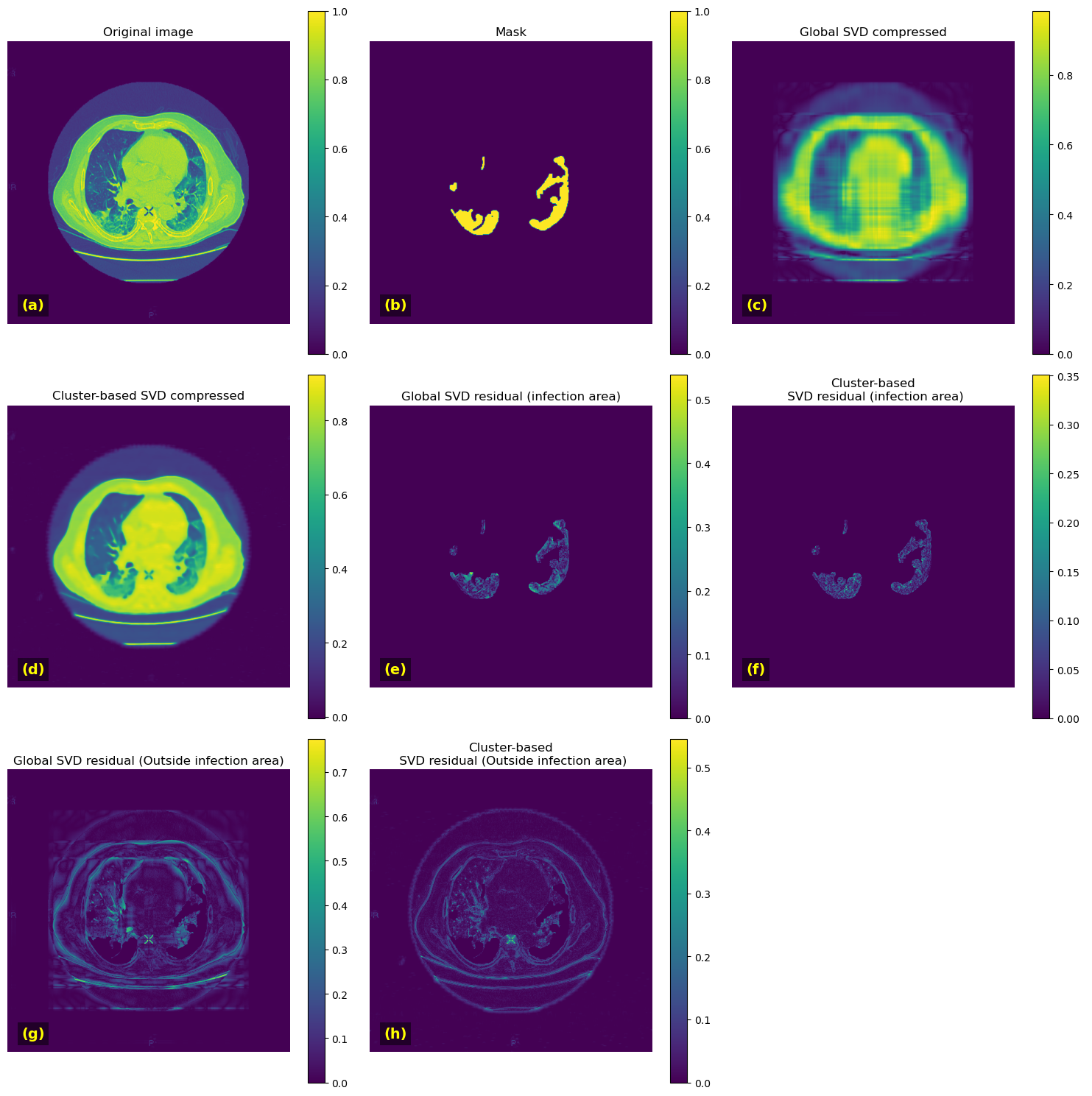}
    \caption{The figure compares cluster-based SVD and global SVD compression techniques applied to a CT scan image, focusing on COVID-19 infected and non-infected areas at a compression factor of approximately 51. (a) shows the original CT scan image. (b) presents a binary mask highlighting the infection regions. (c) displays the globally compressed image. (d) shows the cluster-based SVD compressed image. (e) and (f) illustrate the residual differences (error maps) within the infection area for global and cluster-based SVD, respectively. (g) and (h) depict the residual differences outside the infection area for global SVD and cluster-based SVD, respectively.}
    \label{fig:image_ct}
\end{figure}
On the other hand, Fig.~\ref{fig:edge_ct} focuses on edge preservation in the CT image. Fig.~\ref{fig:edge_ct}(a) shows the original image and its corresponding edge map in Fig.~\ref{fig:edge_ct}(b). The edge maps from the compressed images in Fig.~\ref{fig:edge_ct}(c) for global SVD and Fig.~\ref{fig:edge_ct}(d) for cluster‐based SVD show that global SVD tends to blur or lose fine edges, while cluster‐based SVD retains sharper and more accurate edges. This is further quantified by the edge loss maps in Fig.~\ref{fig:edge_ct}(e) and Fig.~\ref{fig:edge_ct}(f), where the global method displays more extensive loss compared to the less severe loss in the cluster‐based approach. The EPI score further quantifies this advantage, with global SVD achieving
0.7029 and cluster-based SVD achieving 0.7232, indicating better structural
integrity.
\begin{figure}
    \centering
    \includegraphics[width=1.0\linewidth]{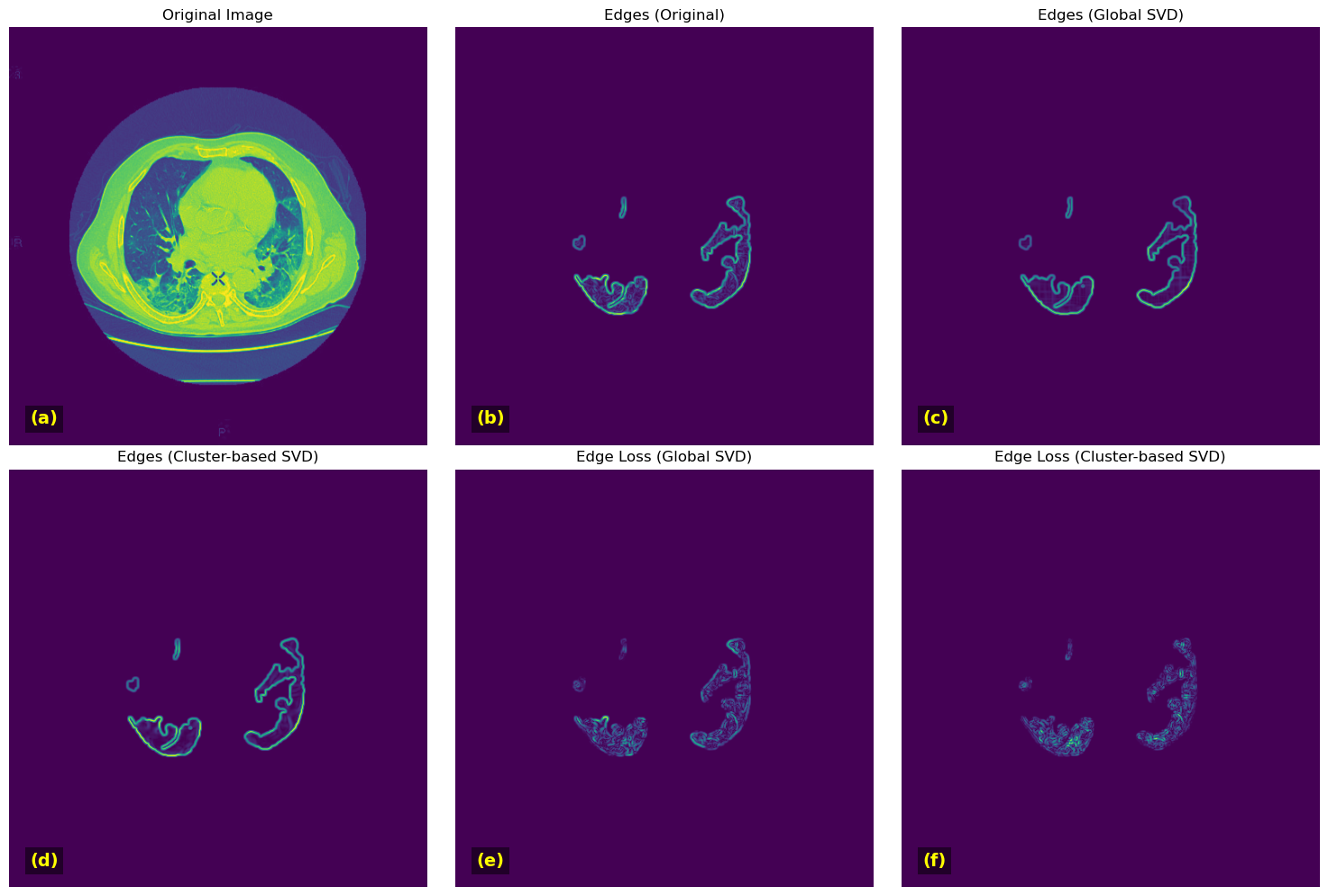}
    \caption{The figure compares the effects of global SVD and cluster-based SVD on edge preservation in a CT scan image, with a focus on infection areas. (a) shows the original CT scan image. (b) presents the detected edges in the original image. (c) and (d) display the edges detected in the globally compressed and cluster-based compressed images, respectively. (e) and (f) depict the edge loss maps for global SVD and cluster-based SVD, respectively.}
    \label{fig:edge_ct}
\end{figure}
While Figs.~\ref{fig:infection_ct} and \ref{fig:noninfection_ct} evaluate the performance of global and cluster-based SVD across varying compression factors for infected (COVID-19) and non-infected regions using different performance metrics, respectively. Cluster-based SVD consistently outperforms global SVD in PSNR, SSIM, MSE, and IoU. Notably, cluster-based SVD ensures superior preservation and extraction of critical features in infected regions while allowing more aggressive compression in non-infected areas. 
\begin{figure*}
    \centering
    \includegraphics[width=1.0\linewidth]{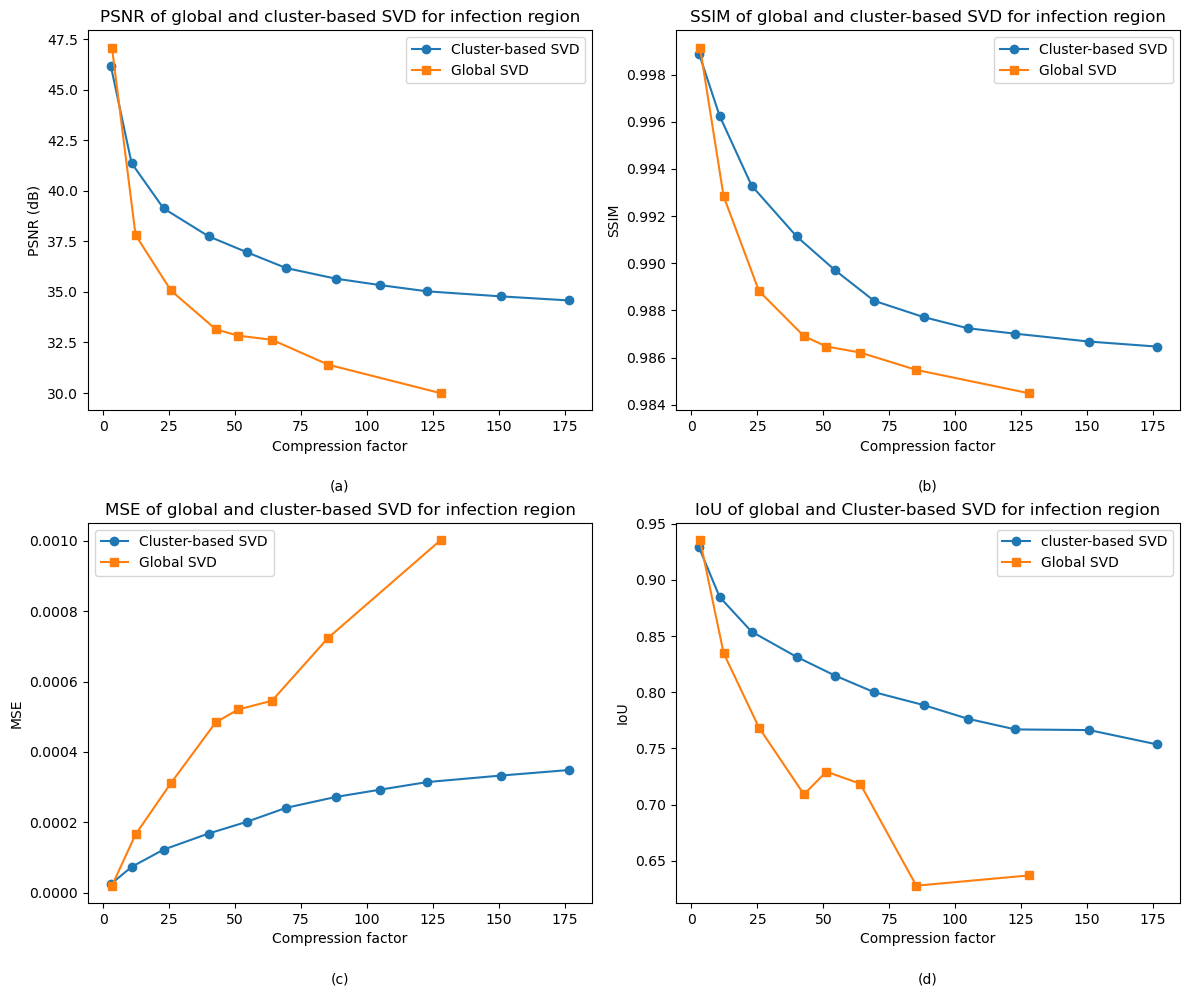}
    \caption{Performance comparison of global and cluster-based SVD for COVID-19 infection region preservation across different compression factors. (a) shows PSNR, (b) shows SSIM, (c) shows MSE, and (d) show IoU.}
    \label{fig:infection_ct}
\end{figure*}
\begin{figure*}
    \centering
    \includegraphics[width=1.0\linewidth]{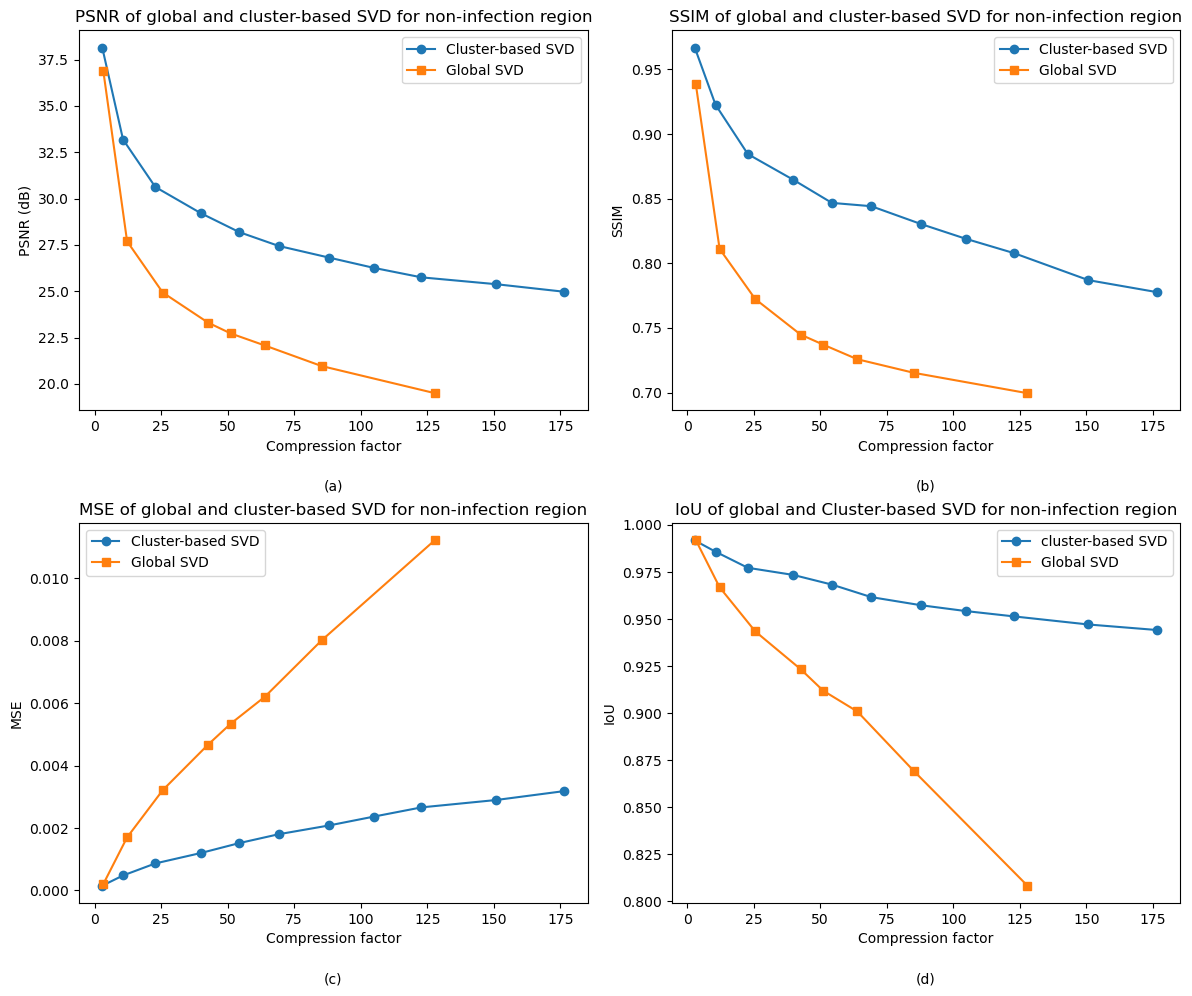}
    \caption{Performance comparison of global and cluster-based SVD for non-COVID-19 infected region preservation across different compression factors. (a) shows PSNR, (b) shows SSIM, (c) shows MSE, and (d) shows IoU.}
    \label{fig:noninfection_ct}
\end{figure*}
\subsection{Experiment 4: X-ray}
\subsubsection{Dataset Description}
The COVID-QU-Ex dataset is a publicly available collection of 33,920 chest X-ray (CXR) images hosted on \href{https://www.kaggle.com/datasets/anasmohammedtahir/covidqu}{Kaggle} and curated by researchers at Qatar University \cite{tahir2021covidquex}. It is designed to facilitate research in COVID-19 detection and related thoracic diseases using medical imaging. The dataset is composed of three categories: 11,956 images from COVID-19 positive patients, 11,263 images labeled as normal, and 10,701 images corresponding to other types of pneumonia. All images are stored in a consistent digital format and are accompanied by labels, making the dataset suitable for tasks such as classification, segmentation, and anomaly detection. The images were acquired using standard clinical imaging protocols. Given the challenges associated with chest X-ray imaging, such as inherent noise and low contrast between anatomical structures, this dataset offers an excellent opportunity to assess image compression techniques. In this experiment, we randomly selected one image to analyze the effects of our method on the chest X-ray modality.

\subsubsection{Performance Evaluation}
The results presented in Figs.~\ref{fig:image_xray} to \ref{fig:noninfection_xray} show the advantage of cluster‐based SVD over global SVD for chest X‑ray image compression, particularly in preserving and extracting critical features. Fig.~\ref{fig:image_xray} compares the two methods at a compression factor of approximately 7, showing that while global SVD introduces noticeable block artifacts and a loss of fine structural detail (Fig.~\ref{fig:image_xray}(c)), especially in the COVID infected region (Fig.~\ref{fig:image_xray}(e)), cluster‐based SVD maintains superior image quality with fewer artifacts (Fig.~\ref{fig:image_xray}(d)). However, Fig.~\ref{fig:image_xray}(g), the residual error in the non-infection region is lower for the global SVD reconstruction than for the cluster‐based SVD approach, which is shown in Fig.~\ref{fig:image_xray}(g). This suggests that while cluster‐based SVD can effectively maintain and extract important features in the regions of clinical importance, global SVD can perform better in non-critical regions where preserving subtle details is less demanding when the compression factor is low.
\begin{figure}
    \centering
    \includegraphics[width=1.0\linewidth]{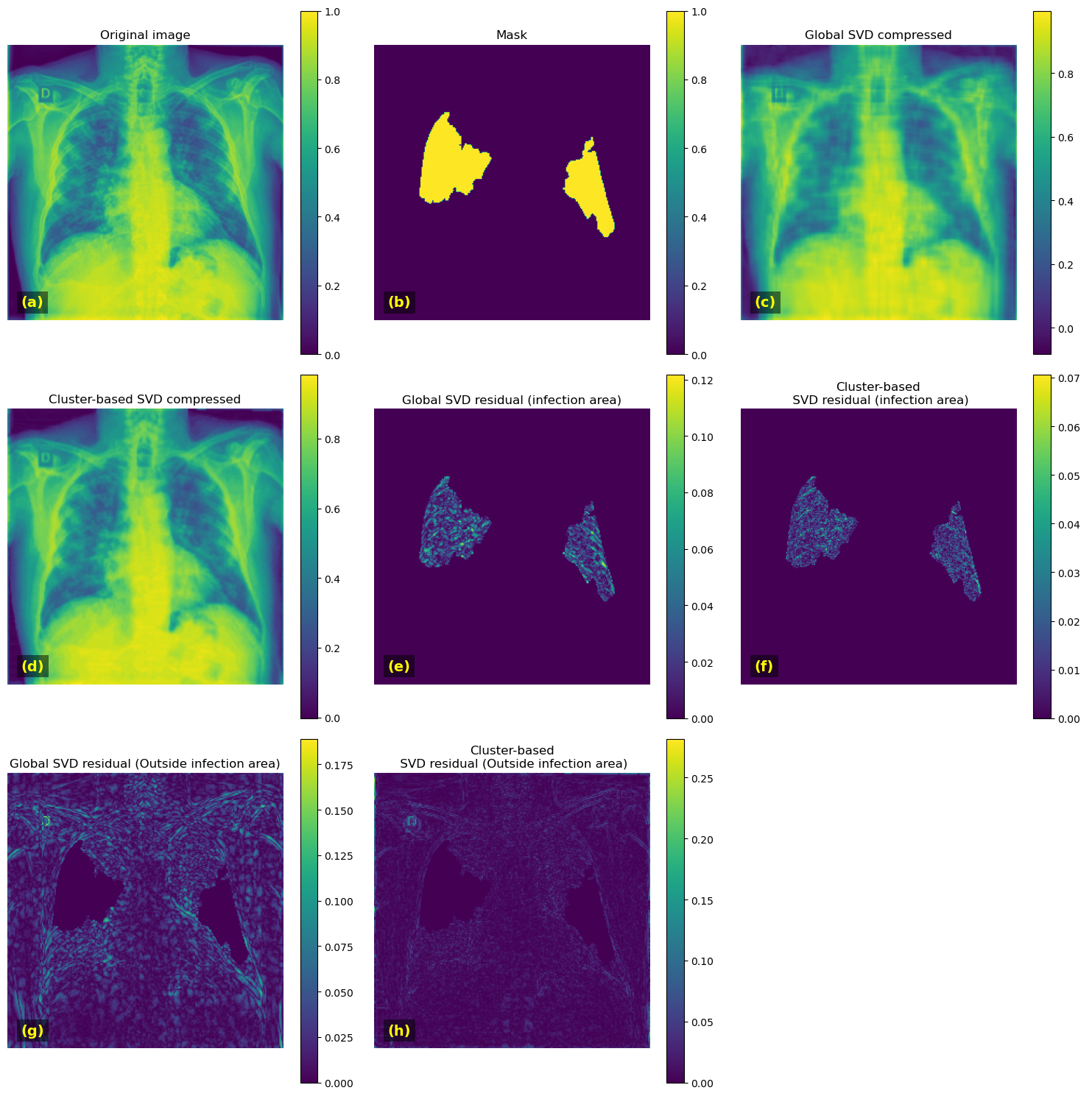}
    \caption{The figure compares cluster-based SVD and global SVD compression techniques applied to a chest X-ray image at the compression factor of approximately 7 in the infection and non-infection area. (a) shows the original X-ray image. (b) presents a binary mask highlighting the infection areas. (c) displays the image compressed using global SVD, while (d) shows the result of cluster-based SVD compression. (e) and (f) illustrate the residuals in the infection area for global and cluster-based SVD, respectively. (g) and (h) show the residuals outside the infection area for global and cluster-based SVD, respectively.}
    \label{fig:image_xray}
\end{figure}
On the other hand, Fig.~\ref{fig:edge_xray} focuses on edge preservation in the compressed chest X-ray image. In Fig.~\ref{fig:edge_xray}(a), the original image is shown alongside its corresponding edge map in Fig.~\ref{fig:edge_xray}(b), highlighting the inherent structural details. The edge maps from the compressed images in Fig.~\ref{fig:edge_xray}(c) for global SVD and Fig.~\ref{fig:edge_xray}(d) for cluster‐based SVD indicate that both methods perform comparably when the compression factor is low, with only subtle differences in edge clarity. Nonetheless, cluster‐based SVD still retains slightly sharper and more accurate edge details. This is further quantified by the edge loss maps in Fig.~\ref{fig:edge_xray}(e) and Fig.~\ref{fig:edge_xray}(f). The difference is also captured numerically by the EPI, where global SVD achieves 0.8579, while cluster‐based SVD outperforms it with a value of 0.9375.
\begin{figure}
    \centering
    \includegraphics[width=1.0\linewidth]{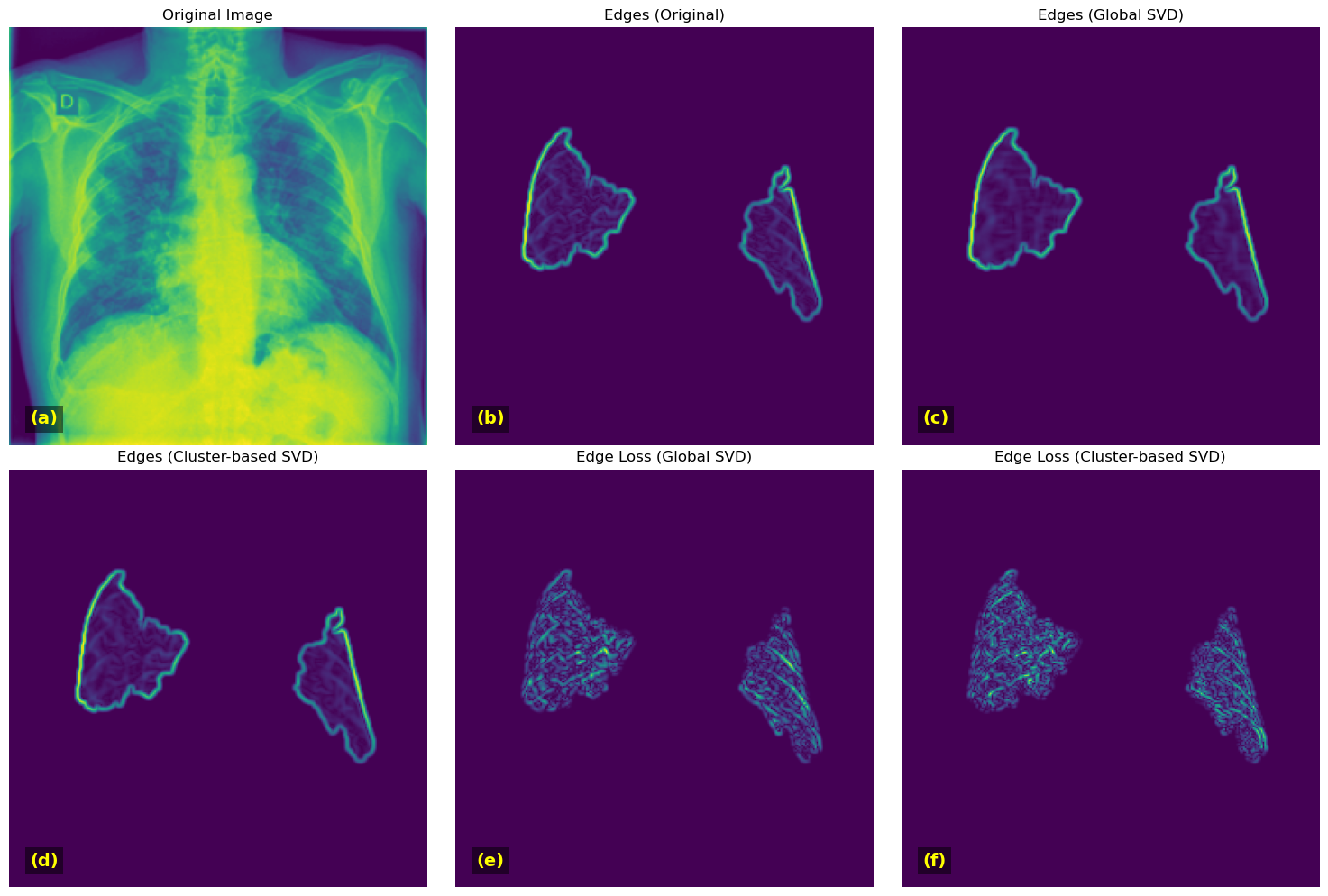}
    \caption{The figure compares the effects of global SVD and cluster-based SVD on edge preservation in a chest X-ray image. (a) shows the original X-ray. (b) presents the detected edges in the original image. (c) and (d) display the edges in the globally compressed and cluster-based compressed images, respectively. (e) and (f) show the edge loss maps for global SVD and cluster-based SVD, respectively.}
    \label{fig:edge_xray}
\end{figure}

While Figs.~\ref{fig:infection_xray} and ~\ref{fig:noninfection_xray} evaluate the performance of the two methods across varying compression factors. Fig.~\ref{fig:infection_xray} examines the infected (COVID-19) region while Fig.~\ref{fig:noninfection_xray} focuses on the non-infected region. In both cases, cluster‐based SVD consistently outperforms global SVD by achieving higher PSNR, IoU and SSIM values along with lower MSE.
\begin{figure*}
    \centering
    \includegraphics[width=1.0\linewidth]{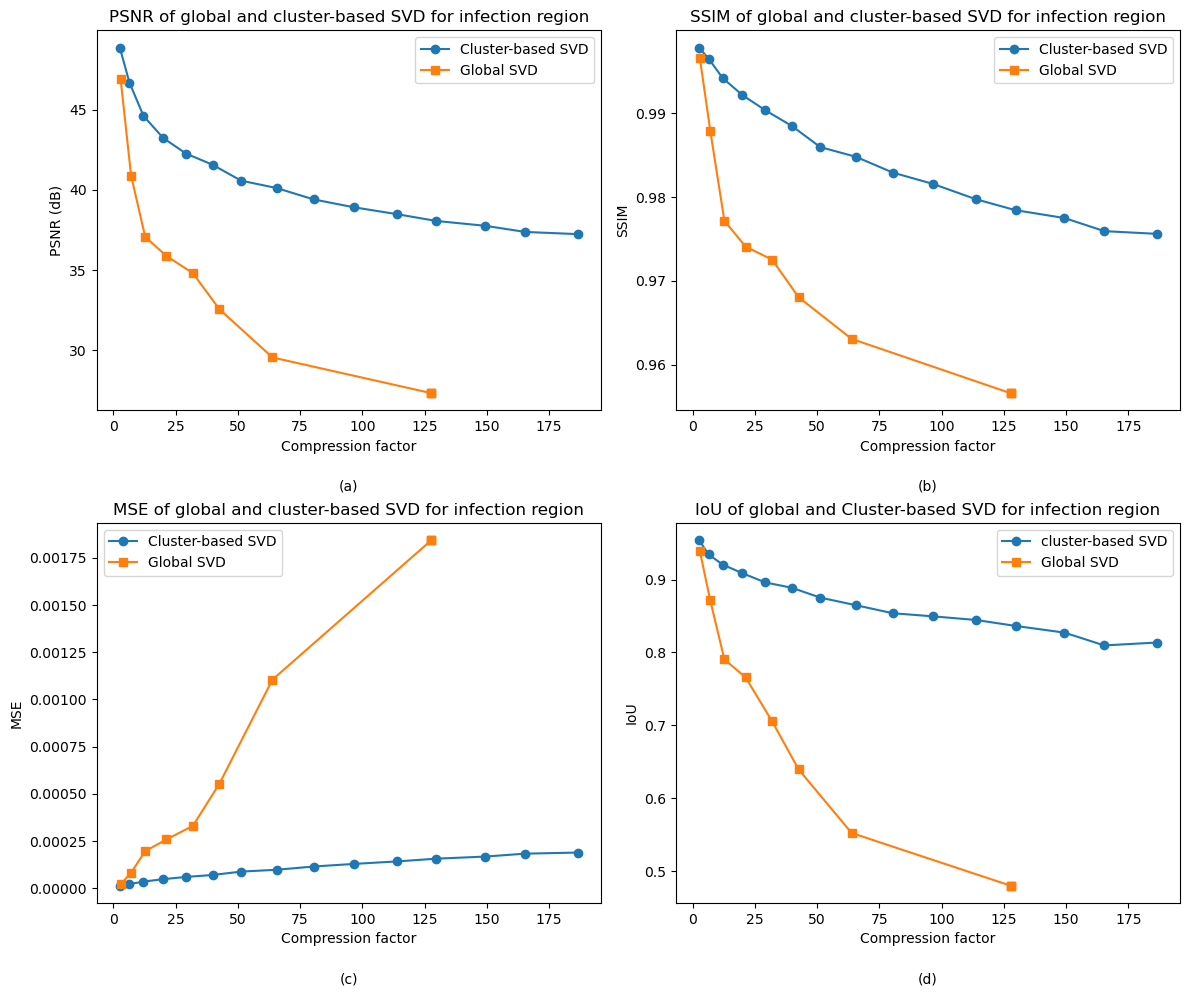}
    \caption{The figure compares global SVD and cluster-based SVD in the infection region of a chest X-ray, evaluated across varying compression factors. (a) PSNR (dB), (b) SSIM, (c) MSE, and (d) IoU.}
    \label{fig:infection_xray}
\end{figure*}
\begin{figure*}
    \centering
    \includegraphics[width=1.0\linewidth]{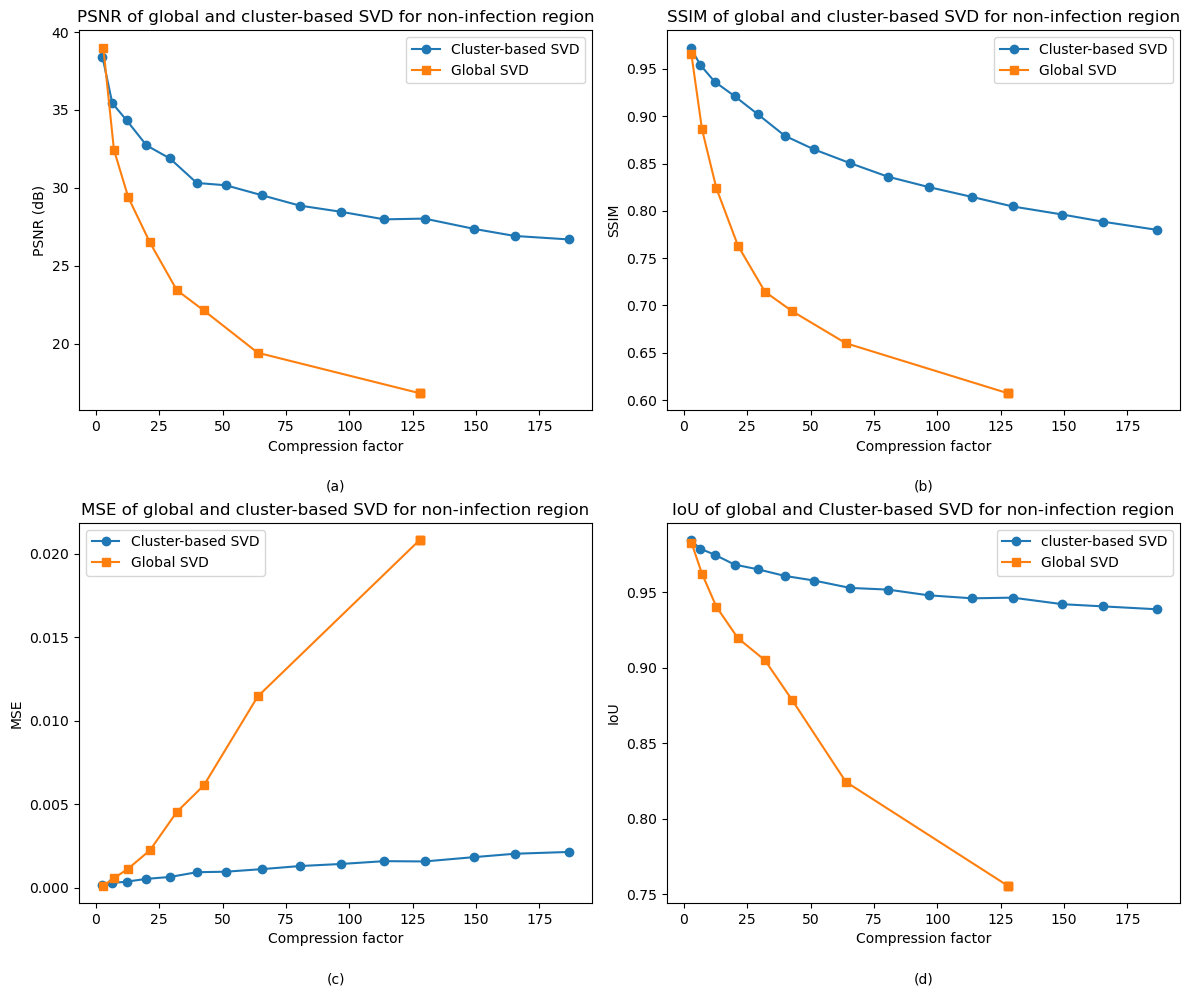}
    \caption{The figure compares global SVD and cluster-based SVD in the non-infection region of a chest X-ray, evaluated across varying compression factors. (a) PSNR (dB), (b) SSIM, (c) MSE, and (d) IoU.}
    \label{fig:noninfection_xray}
\end{figure*}

\end{document}